\newtheorem{theorem}{Theorem}[section]
\newtheorem{definition}[theorem]{Definition}
\newtheorem{proposition}[theorem]{Proposition}
\newtheorem{assumption}[theorem]{Assumption}
\newtheorem{lemma}[theorem]{Lemma}
\newtheorem{remark}[theorem]{Remark}
\newtheorem{corollary}[theorem]{Corollary}
\def\1{\bm{1}}
\def\rmA{{\mathbf{A}}}
\def\rmB{{\mathbf{B}}}
\def\rmC{{\mathbf{C}}}
\def\rmD{{\mathbf{D}}}
\def\rmI{{\mathbf{I}}}
\def\rmM{{\mathbf{M}}}
\def\rmQ{{\mathbf{Q}}}
\def\rmS{{\mathbf{S}}}
\def\rmU{{\mathbf{U}}}
\def\rmV{{\mathbf{V}}}
\def\rmW{{\mathbf{W}}}
\def\rmX{{\mathbf{X}}}
\def\rmY{{\mathbf{Y}}}
\def\vzero{{\bm{0}}}
\def\vone{{\bm{1}}}
\def\vmu{{\bm{\mu}}}
\def\va{{\bm{a}}}
\def\vb{{\bm{b}}}
\def\vu{{\bm{u}}}
\def\vv{{\bm{v}}}
\def\vw{{\bm{w}}}
\def\vx{{\bm{x}}}
\def\vy{{\bm{y}}}
\def\vz{{\bm{z}}}
\DeclareMathAlphabet{\mathsfit}{\encodingdefault}{\sfdefault}{m}{sl}
\SetMathAlphabet{\mathsfit}{bold}{\encodingdefault}{\sfdefault}{bx}{n}
\def\gA{{\mathcal{A}}}
\def\gC{{\mathcal{C}}}
\def\gD{{\mathcal{D}}}
\def\gL{{\mathcal{L}}}
\def\gN{{\mathcal{N}}}
\def\gO{{\mathcal{O}}}
\def\gR{{\mathcal{R}}}
\def\gS{{\mathcal{S}}}
\def\sC{{\mathbb{C}}}
\def\sI{{\mathbb{I}}}
\def\sN{{\mathbb{N}}}
\def\sR{{\mathbb{R}}}
\newcommand{\E}{\mathbb{E}}
\newcommand{\indep}{\perp \!\!\! \perp}
\DeclareMathOperator{\sign}{sign}
\DeclareMathOperator{\Tr}{Tr}
\DeclareMathOperator{\asto}{\xrightarrow{\text{a.s.}}}
\DeclareMathOperator{\toind}{\xrightarrow{\gD}}
\DeclareMathOperator{\diag}{Diag}
\newcommand\ms[1]{\textcolor{red}{MS: #1}}
\newcommand\ael[1]{\textcolor{blue}{AEL: #1}}
\newcommand{\mathcolorbox}[2]{\colorbox{#1}{$\displaystyle #2$}}
\title{High-dimensional Learning with Noisy Labels}
\author{%
  Aymane El Firdoussi\thanks{Equal contribution.} \\
  Technology Innovation Institute\\
  Abu Dhabi, UAE \\
  \texttt{aymane.elfirdoussi@tii.ae} \\
  \And
  Mohamed El Amine Seddik$^*$ \\
  Technology Innovation Institute\\
  Abu Dhabi, UAE \\
  \texttt{mohamed.seddik@tii.ae} \\
}
\begin{document}

\maketitle

\begin{abstract}
    This paper provides theoretical insights into high-dimensional binary classification with class-conditional noisy labels. 
    Specifically, we study the behavior of a linear classifier with a label noisiness aware loss function, when both the dimension of data $p$ and the sample size $n$ are large and comparable.
    Relying on random matrix theory by supposing a Gaussian mixture data model, the performance of the linear classifier when $p,n\to \infty$ is shown to converge towards a limit, involving scalar statistics of the data. 
    Importantly, our findings show that the low-dimensional intuitions to handle label noise do not hold in high-dimension, in the sense that the optimal classifier in low-dimension dramatically fails in high-dimension. 
    Based on our derivations, we design an optimized method that is shown to be provably more efficient in handling noisy labels in high dimensions.
    Our theoretical conclusions are further confirmed by experiments on real datasets, where we show that our optimized approach outperforms the considered baselines.
\end{abstract}

\section{Intorduction}

Machine learning methods are usually built upon low-dimensional intuitions which do not necessarily hold when processing high-dimensional data. Numerous studies have demonstrated the effects of the curse of dimensionality, by showing that high dimensions can alter the internal functioning of various ML methods designed with low-dimensional intuitions. Classical examples include spectral methods \citep{couillet2016kernel}, empirical risk minimization frameworks \citep{el2013robust, mai2019high}, transfer \& multi-task learning \citep{tiomoko2021deciphering}, deep learning theory with the double descent phenomena \citep{nakkiran2021deep, mei2022generalization} and many other works. In all this literature, random matrix theory (RMT) played a central role in deciphering the high-dimensional effects by supposing the so-called RMT regime where both the dimension of data and the sample size are supposed to be large and comparable. We refer the reader to \citep{bai2010spectral} for a general overview on the spectral analysis of large random matrices, and to \citep{couillet2022random} for specific applications of RMT in the realm of machine learning.

In this paper, we aim at exploring the high-dimensional effects on learning with noisy labels. Based on the framework of \cite{natarajan2018cost}, who derived an unbiased classifier when faced with a binary classification problem with class-conditional noisy labels, we introduce a \textit{Labels-Perturbed Classifier (LPC)} that is essentially a Ridge classifier with parameterized labels. The introduced classifier encapsulates different variants depending on the choice of the label parameters including the unbiased method of \cite{natarajan2018cost}.
Considering a Gaussian mixture data model and supposing a high-dimensional regime, we conduct an RMT analysis of LPC by characterizing the distribution of its decision function and deriving its theoretical test performance in terms of both accuracy and risk. Our analysis allows us to gain insight when learning with noisy labels, and more importantly design an optimized classifier that surprisingly outperforms the unbiased classifier of \cite{natarajan2018cost} in high dimensions, even approaching the performance of an oracle classifier that is trained with the correct labels. Through this analysis, we demonstrate again that methods designed with low-dimensional intuitions can dramatically fail in high-dimensions, and careful refinements are needed to design more robust and interpretable methods. Our theoretical findings are also validated on real data where we show consistent improvements under high label noise. 

The remainder of the paper is organized as follows. Section \ref{sec_related_work} presents related work in the realm of learning with noisy labels. Our setting and main assumptions along with essential RMT notions are presented in Section \ref{sec_setting}. The main results brought by this paper are deferred to Section \ref{sec_main_results}. In Section \ref{sec_experiments} we conduct experiments to validate our findings on real data. Finally, Section \ref{sec_conclusion} concludes the paper and discusses future extensions. All our proofs are deferred to the Appendix.
\section{Related work}\label{sec_related_work}

Numerous studies have been conducted to investigate supervised learning under noisy labels, spanning both theoretical and empirical approaches. These studies range from learning theory and statistical perspectives to practical implementations using neural networks and deep learning techniques. 

Key contributions in this field include: \textit{Bayesian Approaches:} \cite{graepel2000kernel} conducted a Bayesian study on learning with noisy labels. \cite{lawrence2001estimating} estimated noise levels in kernel-based learning a work that was later extended by \cite{li2007classification}, who incorporated a probabilistic noise model into the Kernel Fisher discriminant and relaxed distribution assumptions. \textit{Robust Optimization Approaches:} \cite{freund2009more} proposed a robust boosting algorithm using a non-convex potential, which demonstrated empirical resilience against random label noise. \cite{jiang2001some} provided a survey of theoretical results on boosting with noisy labels. \textit{Model-Specific Robustness:} \cite{biggio2011support} explored the robustness of SVMs under adversarial label noise and proposed a kernel matrix correction method to enhance robustness. \textit{Algorithmic Innovations:} Several noise-tolerant versions of the perceptron algorithm have been developed, including Passive-aggressive algorithms \citep{crammer2006online}, Confidence-weighted learning \citep{dredze2008confidence}, AROW \citep{crammer2009adaptive}, and NHERD algorithm \citep{crammer2010learning}. \textit{Deep Learning Approaches:} Recent works have utilized deep learning techniques to address noisy labels. For example, \cite{li2020dividemix} introduced Dividemix, a semi-supervised learning algorithm for learning with noisy labels. \cite{ma2018dimensionality} studied the generalization behavior of deep neural networks (DNNs) for noisy labels in terms of intrinsic dimensionality, proposing a Dimensionality-Driven Learning (D2L) strategy to avoid overfitting. \cite{tanaka2018joint} addressed noisy labels in computer vision contexts, while \cite{karimi2020deep} applied these techniques to medical imaging.

Our work is closely related to the studies in \citep{natarajan2013learning, natarajan2018cost}, which consider adaptive loss functions and assume the prior knowledge of the noise rates. \cite{scott2013classification} do not make this assumption and model the true distribution as satisfying a mutual irreducibility property, then estimating mixture proportions by maximal denoising of noisy distributions. \cite{manwani2013noise} investigated the impact of the loss function on noise tolerance, showing that empirical risk minimization under the 0-1 loss has robust properties, while the squared loss is noise-tolerant only under uniform noise. For a comprehensive overview of the field, readers can refer to the survey by \cite{song2022learning} on learning with noisy labels.

\section{Problem setting and Background}\label{sec_setting}
\subsection{Binary classification with noisy labels}
We consider that we are given a sequence of $n$ i.i.d  $p$-dimensional training data $\vx_1, ..., \vx_n \in \sR^p$ with corresponding correct labels $y_1, ..., y_n=\pm 1$. We consider a noisy label setting where the true labels $y_i$'s are flipped randomly, yielding a noisy dataset $(\vx_i,\tilde y_i)_{i\in [n]}$ such that
\begin{align*}
    \mathbb P(\tilde y_i=-1\mid y_i=+1)=\varepsilon_+,\quad \mathbb P(\tilde y_i=+1\mid y_i=-1)=\varepsilon_-,\quad \text{with}\quad \varepsilon_+ + \varepsilon_- < 1.
\end{align*}
We suppose that $\vx_i$ is sampled from a Gaussian mixture of two clusters $\gC_1$ and $\gC_2$, i.e., for $a\in [2]$:
\begin{align}\label{eq_data_model_isotropic}
    \vx_i \in \gC_a \quad \Leftrightarrow \quad \begin{cases}
        \vx_i = \vmu_a + \vz_i, \quad \vz_i \sim \gN(\vzero, \rmI_p), \\
        y_i = (-1)^a.
    \end{cases}
\end{align}
For convenience and without loss of generality, we further assume that $\vmu_a = (-1)^a \vmu$ for some vector $\vmu \in \sR^p$. This setting can be recovered by subtracting $\frac{\vmu_1 + \vmu_2}{2}$ from each data point, as such $\vmu = \frac{ \vmu_2 - \vmu_1 }{2}$ and therefore the SNR $\Vert \vmu \Vert$ controls the difficulty of the classification problem, in the sense that large values of $\Vert \vmu \Vert$ yield a simple classification problem whereas when $\Vert \vmu \Vert \to 0$, the classification becomes impossible.

\begin{remark}[On the data model]\label{remark_distribution}
    Note that the above data assumption can be relaxed to considering $\vx_i = \vmu_a + \rmC_a^{\frac12} \vz_i$ where $\rmC_a$ is some semi-definite covariance matrix and $\vz_i$ are random vectors with i.i.d entries of mean $0$, variance $1$ and bounded fourth order moment. In fact, in the high-dimensional regime when $n,p\to \infty$, the asymptotic performance of the classifier considered subsequently is universal in the sense that it depends only on the statistical means and covariances of the data \citep{louart2018concentration, seddik2020random, dandi2024universality}. However, such a general setting comes at the expense of more complex formulas, making the above isotropic assumption more convenient for readability and better interpretation of our findings. We provide a more general result of our main result (Theorem \ref{thm_main}) by considering arbitrary covariance matrices in the Appendix (Theorem \ref{thm_general_covariance}).
\end{remark}

\paragraph{Naive approach}
Given the noisy dataset $(\vx_i,\tilde y_i)_{i\in [n]}$ as per \eqref{eq_data_model_isotropic}, a naive learning approach would consist in ignoring the noisiness of the labels and training a given classifier, such as a Ridge classifier which consists of minimizing the following:
\begin{align*}
    \mathcal L_0(\vw) = \frac1n \sum_{i=1}^n (\vw^\intercal \vx_i-\tilde y_i)^2  + \gamma\| \vw \|^2,
\end{align*}
where $\gamma\geq 0$ is a regularization parameter. Therefore, the solution for the naive classifier is given by:
\begin{align*}
    \vw_0 = \frac1n \rmQ(\gamma) \rmX \tilde \vy, \quad \rmQ(z) = \left( \frac1n \rmX \rmX^\top + z \rmI_p \right)^{-1},
\end{align*}
where $\rmX = \left[ \vx_1, \ldots, \vx_n \right] \in \sR^{p\times n}$ and $\tilde \vy = (\tilde y_1, \ldots, \tilde y_n)^\top \in \sR^n$. 

\paragraph{Improved approach} 
\cite{natarajan2018cost} proposed an unbiased approach which takes into account the noisiness of the labels. Specifically, given any bounded loss function $\ell(s,y)$, their approach consists in considering:
\begin{align*}
    \tilde\ell(s, y)\equiv \frac{(1-\varepsilon_{-y})\ell(s, y) - \varepsilon_{y}\ell(s, -y)}{1-\varepsilon_+ -\varepsilon_-}.
\end{align*}
The main intuition behind this proposition is that this loss has the nice property of being an unbiased estimator of the loss $\ell(s,y)$ on the correct dataset $(\vx_i, y_i)_{i\in [n] }$, since it satisfies for any $s,y$:
\begin{align*}
    \mathbb{E}_{\tilde y}[\tilde \ell(s,\tilde y)] = \ell(s,y).
\end{align*}
In the remainder, we consider the following loss which introduces scalar parameters $\rho_\pm$, to be optimized, rather than $\varepsilon_\pm$:
\begin{align}
\label{lpc-loss}
    \tilde\ell(s, y, \rho)\equiv \frac{(1-\rho_{-y})\ell(s, y) - \rho_{y}\ell(s, -y)}{1-\rho_+ -\rho_-},
\end{align}
Hence, for $\ell(s,y)=(s-y)^2$ and supposing a linear classifier $s(\vx)=\vw^\top \vx$, the empirical loss with $\tilde \ell$ reads as:
\begin{align*}
    \mathcal L_\rho(\vw) = \frac1n \sum_{i=1}^n \frac{(1-\rho_{-\tilde y_i})(\vw^\top \vx_i-\tilde y_i)^2 - \rho_{\tilde y_i}(\vw^\top \vx_i+\tilde y_i)^2}{1-\rho_+ -\rho_-} + \gamma\| \vw \|^2.
\end{align*}
The solution of which defines our \emph{Labels-Perturbed Classifier (LPC)} as follows:
\begin{align} \label{w_imp}
     \vw_\rho = \frac1n \rmQ(\gamma) \rmX \rmD_\rho\tilde \vy, \quad \rmQ(z) = \left(\frac1n \rmX \rmX^\top + z \rmI_p\right)^{-1},
\end{align}
where $\rmD_\rho$ is a diagonal matrix defined as $\rmD_\rho = \diag\left(\frac{1-\rho_{-\tilde y_i} + \rho_{\tilde y_i}}{1-\rho_+ -\rho_-}\mid i\in[n]\right)\in \mathcal{D}_n$. In the remainder, we will study the performance of $\vw_\rho$ which encapsulates the following cases:
\begin{itemize}
    \item \textit{Naive Classifier:} which corresponds to $\rho_\pm = 0$.
    \item \textit{Unbiased Classifier:} by taking $\rho_\pm = \varepsilon_\pm$ as introduced by \cite{natarajan2018cost}.
    \item \textit{Optimized Classifier:} by optimizing $\rho_\pm$ to maximize the theoretical test accuracy.
    \item \textit{Oracle Classifier:} which corresponds to training on the correct labels, i.e., $\rho_\pm = \varepsilon_\pm = 0$.
\end{itemize}

We aim to characterize the asymptotic performance (i.e., test accuracy and risk) of LPC in the high-dimensional regime where both the sample size $n$ and the data dimension $p$ grow large at a comparable rate, which corresponds to the classical random matrix theory (RMT) regime. Specifically, our analysis confirms that the \textit{unbiased} classifier outperforms the \textit{naive} classifier in a low-dimensional regime, i.e., when $n\gg p$. In contrast, when considering the RMT regime, we show that the \textit{unbiased} classifier becomes sub-optimal and we provide an \textit{optimized} classifier that consists of maximizing the derived test accuracy w.r.t the scalars $\rho_\pm$ yielding a closed-form solution. This sheds light on the fact that low-dimensional intuitions do not necessarily hold for high dimensions and careful refinements should be considered to enhance the performance of simple algorithms in these settings. Moreover, and of independent interest, our analysis allows us to design a method to estimate the rates $\varepsilon_\pm$ which is a key step of our approach and the \textit{unbiased} classifier \citep{natarajan2018cost}.

\subsection{RMT Background}
In mathematical terms, the understanding of the asymptotic performance of the classifier $\vw_\rho$ boils down to the characterization of the statistical behavior of the \textit{resolvent matrix} $\rmQ(z)$ introduced in \eqref{w_imp}. In the following, we will recall some important notions and results from random matrix theory which will be at the heart of our analysis. We start by defining the main object which is the resolvent matrix.
\begin{definition}[Resolvent]
    For a symmetric matrix $\rmM \in \sR^{p \times p}$, the resolvent $\rmQ_{M}(z)$ of $\rmM$ is defined for $z \in \sC \backslash  \gS (\rmM)$ as:
    \begin{equation*}
        \rmQ_{M}(z) = (\rmM - z \rmI_p)^{-1},
    \end{equation*}
    where $\gS (\rmM)$ is the set of eigenvalues or spectrum of $\rmM$.
\end{definition}
The matrix $\rmQ_{M}(z)$ will often be denoted $\rmQ(z)$ or $\rmQ$ when there is no ambiguity.
In fact, the study of the asymptotic performance of $\vw_\rho$ involves the estimation of linear forms of the resolvent $\rmQ$ in \eqref{w_imp}, such as $\frac1n\Tr\rmQ$ and $\va^\top \rmQ \vb$ with $\va,\vb\in \sR^p$ of bounded Euclidean norms. Therefore, the notion of a \textit{deterministic equivalent} \citep{hachem2007deterministic} is crucial as it allows the design of a deterministic matrix, having (in probability or almost surely) asymptotically the same \textit{scalar observations} as the random ones in the sense of \textit{linear forms}. 
A rigorous definition is provided below.

\begin{definition}[Deterministic equivalent \citep{hachem2007deterministic}]
    We say that $\bar \rmQ \in \sR^{p \times p}$ is a deterministic equivalent for the random resolvent matrix $\rmQ \in \sR^{p \times p}$ if, for any bounded linear form $u:\sR^{p\times p} \to \sR$, we have that, as $p\to \infty$:
    \begin{align*}
        u(\rmQ) \asto u(\bar \rmQ),
    \end{align*}
    where the convergence is in the \textit{almost sure} sense.
\end{definition}

In particular, a deterministic equivalent for the resolvent $\rmQ(z)$ defined in \eqref{w_imp} is given by the following Lemma, a result that is brought from \citep{louart2018concentration}.

\begin{lemma}[Deterministic equivalent of the resolvent]
\label{DEQ}
Under the high-dimensional regime, when $p,n\to \infty$ with $\frac{p}{n} \to \eta \in (0, \infty)$ and assuming $\Vert \vmu \Vert = \gO(1)$. A deterministic equivalent for $\rmQ\equiv\rmQ(\gamma)$ as defined in \eqref{w_imp} is given by:
\begin{align*}
    \bar \rmQ = \left ( \frac{\vmu \vmu^\top + \rmI_p}{ 1 + \delta} + \gamma \rmI_p \right)^{-1}, \quad \delta = \frac{1}{n} \Tr \bar \rmQ = \frac{\eta - \gamma - 1 + \sqrt{(\eta - \gamma - 1)^2 + 4 \eta \gamma}}{2 \gamma}.
\end{align*}
\end{lemma}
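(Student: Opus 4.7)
The strategy is to first reduce to a one-mean spiked sample covariance model by exploiting the isotropy of the noise, and then invoke the standard sample-covariance deterministic equivalent (in the style of \citep{louart2018concentration}) before solving the scalar fixed-point equation for $\delta$.

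\textbf{Step 1 (distributional reduction).} Writing $\vx_i = y_i \vmu + \vz_i$ with $\vz_i \sim \gN(\vzero, \rmI_p)$ independent of $y_i \in \{\pm 1\}$, the sign-symmetry $y_i \vz_i \stackrel{d}{=} \vz_i$ implies
\begin{align*}
    \vx_i \vx_i^\top \;=\; \vmu \vmu^\top + y_i(\vmu \vz_i^\top + \vz_i \vmu^\top) + \vz_i \vz_i^\top \;\stackrel{d}{=}\; (\vmu+\vz_i)(\vmu+\vz_i)^\top .
\end{align*}
Consequently $\frac1n \rmX\rmX^\top$ has the same law as $\frac1n \rmX_0\rmX_0^\top$ with $\rmX_0 = \vmu\vone^\top + \rmZ$, and every sample now has population second moment $\E[\vx_i\vx_i^\top] = \vmu\vmu^\top + \rmI_p$. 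This reduces the problem to a standard rank-one spiked Wishart deterministic equivalent.

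\textbf{Step 2 (applying the DE theorem).} I would then invoke the standard sample-covariance deterministic equivalent \citep{louart2018concentration} for samples with common population second moment $\rmC_* := \vmu\vmu^\top + \rmI_p$: for $\rmQ(\gamma) = (\frac1n \rmX_0\rmX_0^\top + \gamma \rmI_p)^{-1}$, we have $\va^\top(\rmQ - \bar\rmQ)\vb \asto 0$ for any bounded-norm deterministic $\va,\vb$, with
\begin{align*}
    \bar\rmQ \;=\; \left(\frac{\rmC_*}{1+\delta} + \gamma \rmI_p \right)^{-1}, \qquad \delta \;=\; \frac1n \Tr \bar\rmQ .
\end{align*}
The proof of this ingredient goes through the leave-one-out identity $\rmQ = \rmQ_{-i} - \frac{\rmQ_{-i}\vx_i\vx_i^\top \rmQ_{-i}/n}{1 + \vx_i^\top \rmQ_{-i} \vx_i/n}$, the comparison $\bar\rmQ - \E\rmQ = \bar\rmQ(\rmC_*/(1+\delta) - \frac1n\sum_i \E[\vx_i \vx_i^\top/(1+\vx_i^\top\rmQ_{-i}\vx_i/n)])\E\rmQ$, and Gaussian concentration of the quadratic forms $\vx_i^\top \rmQ_{-i}\vx_i/n$ around $\frac1n\Tr(\rmC_* \bar\rmQ)$; this is a well-established argument and constitutes the bulk of the technical work.

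\textbf{Step 3 (solving for $\delta$).} The self-consistent equation becomes explicit: since $\vmu\vmu^\top$ is rank one with $\|\vmu\|=\gO(1)$, its contribution to $\frac1n \Tr \bar\rmQ$ is $\gO(1/n)$, and
\begin{align*}
    \delta \;=\; \frac1n \Tr \left(\frac{\vmu\vmu^\top + \rmI_p}{1+\delta} + \gamma \rmI_p\right)^{-1} \;=\; \frac{p}{n}\cdot \frac{1+\delta}{1+\gamma(1+\delta)} + o(1),
\end{align*}
which rearranges to the quadratic $\gamma\delta^2 + (1+\gamma-\eta)\delta - \eta = 0$. Picking the positive root (since $\delta \geq 0$ as a limiting trace of a positive-definite resolvent) yields exactly the claimed $\delta = (\eta-\gamma-1+\sqrt{(\eta-\gamma-1)^2 + 4\eta\gamma})/(2\gamma)$.

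\textbf{Main obstacle.} The only non-routine step is Step 2, i.e., justifying the deterministic equivalent for the resolvent of a spiked sample-covariance-type matrix in the presence of a non-trivial mean. Once one accepts the Louart--Couillet framework (which applies since $\vmu$ has bounded norm and the entries of $\rmZ$ are sub-Gaussian), the rest is algebraic manipulation. If one preferred a self-contained derivation, the effort would concentrate on the concentration of the quadratic forms $\vx_i^\top \rmQ_{-i} \vx_i/n$ uniformly in $i$, which is where the isotropy assumption pays off via the Hanson--Wright inequality.
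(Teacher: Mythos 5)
Your proposal is correct, and it is consistent with how the paper handles this lemma: the paper offers no proof at all, simply importing the deterministic equivalent from \citep{louart2018concentration}, which is exactly the ingredient you defer to in your Step~2. Your Steps~1 and~3 actually supply more detail than the paper does --- the sign-symmetry reduction to a common second moment $\vmu\vmu^\top+\rmI_p$ (which, as you could note, is not even strictly needed since both classes already share that second moment, which is all the Louart--Couillet framework requires), and the explicit derivation of the quadratic $\gamma\delta^2+(1+\gamma-\eta)\delta-\eta=0$ whose positive root gives the stated closed form. The only pedantic remark is that the canonical fixed point from the cited framework is $\delta=\frac1n\Tr(\rmC_*\bar\rmQ)$ rather than $\frac1n\Tr\bar\rmQ$; these differ by $\frac1n\vmu^\top\bar\rmQ\vmu=\gO(1/n)$, which your Step~3 rank-one argument already absorbs, so nothing is missing.
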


In a low-dimensional setting, i.e. when $p$ being fixed while $n\to \infty$, the resolvent $\rmQ$ converges almost surely to $\left ( \vmu \vmu^\top + (1 + \gamma) \rmI_p \right)^{-1}$ which is also covered by Lemma \ref{DEQ} as $\delta\to 0$ in this setting. However, when both $p$ and $n$ are large and comparable, the data dimension induces a bias which is captured by the quantity $\delta$ as it becomes $\gO(1)$ in the RMT regime. We will highlight in the following that this bias alters the behavior of the classifier $\vw_\rho$ in high dimensions, in particular, making the \textit{unbiased} classifier $\vw_\varepsilon$ introduced by \cite{natarajan2018cost} unexpectedly sub-optimal when learning with noisy labels in high-dimensions. 

\section{Main Results}\label{sec_main_results}
\subsection{Asymptotic Behavior of the Labels-Perturbed Classifier (LPC)}

We are now in place to present our main technical result which describes the asymptotic behavior of LPC as defined in \eqref{w_imp}. Specifically, we provide our results under the following growth rate assumptions.

\begin{assumption}[Growth Rates]\label{assum_growth_rate}
    Suppose that as $p,n\to \infty$:
    \begin{center}
        1) $\frac{p}{n} \to \eta \in (0, \infty)$,\hspace{1cm}
        2) $\frac{n_a}{n} \to \pi_a\in (0, 1)$, \hspace{1cm}
        3) $\Vert \vmu \Vert = \gO(1)$,
    \end{center}
    where $n_a$ denotes the cardinality of the class $\gC_a$ for $a\in [2
    ]$.
\end{assumption}

We emphasize that the condition $\Vert \vmu \Vert = \gO(1)$ reflects the fact that as the dimension $p$ grows large, the classification problem is neither impossible nor trivial making this assumption reasonable in the considered high-dimensional regime. We refer the reader to \citep{couillet2016kernel} for a more general formulation of this assumption under a $k$-class Gaussian mixture model.

Further, define the following quantities which will be used subsequently:
\begin{align}\label{eq_quantities}
    \lambda_- = \frac{1 - \rho_+ + \rho_-}{1 - \rho_+ - \rho_-}, \,\, \lambda_+ = \frac{1 - \rho_- + \rho_+}{1 - \rho_+ - \rho_-}, \,\, \beta = \frac{1}{1 - \rho_+ - \rho_-}, \,\, h=1 -  \frac{\eta}{(1 + \gamma (1 + \delta)^2)}.
\end{align}
Our main result is therefore given by the following theorem.
\begin{theorem}[Gaussianity of LPC]\label{thm_main}
    Let $\vw_{\rho}$ be the LPC as defined in \eqref{w_imp} and suppose that Assumption \ref{assum_growth_rate} holds. The decision function $\vw_\rho^\top \vx$, on some test sample $\vx\in \gC_a$ independent from $\rmX$, satisfies:
    \begin{align*}
    \vw_{\rho}^\top \vx \,\, \toind  \,\, \gN\left( (-1)^a m_\rho,\,  \nu_\rho - m_\rho^2 \right),
\end{align*}
where:
\begin{align*}
    m_\rho &= \frac{ \pi_1(\lambda_- - 2\beta \varepsilon_-) + \pi_2 (\lambda_+ - 2 \beta \varepsilon_+) }{\Vert \vmu \Vert^2 + 1 + \gamma(1 + \delta)} \Vert \vmu \Vert^2, \\
    \nu_\rho &= \frac{(\pi_1(2 \beta \varepsilon_- - \lambda_-) + \pi_2(2 \beta \varepsilon_+ - \lambda_+))^2}{h (\Vert \vmu \Vert^2 + 1 + \gamma (1 + \delta))} \left ( \frac{\Vert \vmu \Vert^2 + 1}{\Vert \vmu \Vert^2 + 1 + \gamma (1 + \delta)} - 2(1 - h) \right) \Vert \vmu \Vert^2 \\
    +& \frac{(1 - h)}{h} \left( \pi_1 (4 \beta^2 \varepsilon_-(\rho_+ - \rho_-) + \lambda_-^2 ) + \pi_2(4 \beta^2 \varepsilon_+(\rho_- - \rho_+) + \lambda_+^2) \right).
\end{align*}
\end{theorem}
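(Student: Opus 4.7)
The plan is to decompose the test sample as $\vx = (-1)^a \vmu + \vz$ with $\vz \sim \gN(\vzero,\rmI_p)$ independent of the training matrix $\rmX$, so that $\vw_\rho^\top \vx \mid \rmX \sim \gN\bigl((-1)^a \vw_\rho^\top \vmu,\, \|\vw_\rho\|^2\bigr)$. The theorem then reduces to the two almost-sure convergences $\vw_\rho^\top \vmu \asto m_\rho$ and $\|\vw_\rho\|^2 \asto \nu_\rho - m_\rho^2$, after which Slutsky's theorem delivers the stated weak limit (identifying $\nu_\rho$ as the asymptotic second moment of the decision function). All the work is thus concentrated into computing two deterministic scalar limits via Lemma~\ref{DEQ}.

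For $m_\rho$, I would expand $\vw_\rho^\top \vmu = \tfrac{1}{n}\sum_i d_i \tilde y_i\, \vx_i^\top \rmQ \vmu$ and decouple $\vx_i$ from $\rmQ$ via Sherman--Morrison, which yields $\vx_i^\top \rmQ \vmu = \vx_i^\top \rmQ_{-i}\vmu / (1 + \tfrac{1}{n}\vx_i^\top \rmQ_{-i}\vx_i)$. The trace lemma together with Lemma~\ref{DEQ} gives $\tfrac{1}{n}\vx_i^\top \rmQ_{-i}\vx_i \asto \delta$ and $\vx_i^\top \rmQ_{-i}\vmu \asto y_i\, \vmu^\top \bar{\rmQ}\vmu$, while a short Sherman--Morrison computation on the rank-one perturbation defining $\bar{\rmQ}$ gives $\vmu^\top \bar{\rmQ}\vmu = (1+\delta)\|\vmu\|^2/(\|\vmu\|^2 + 1 + \gamma(1+\delta))$. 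Tabulating the four $(y_i,\tilde y_i)$ cases (with $d_i\tilde y_i \in \{\lambda_+,-\lambda_-\}$ according to $\tilde y_i$) and applying the law of large numbers, $\tfrac{1}{n}\sum_i d_i \tilde y_i y_i \asto \pi_1(\lambda_- - 2\beta\varepsilon_-) + \pi_2(\lambda_+ - 2\beta\varepsilon_+)$. Multiplying these three factors reproduces $m_\rho$ exactly.

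The harder step, which I expect to be the main obstacle, is the second-moment quantity $\|\vw_\rho\|^2 = \tfrac{1}{n^2}(\rmD_\rho \tilde\vy)^\top \rmX^\top \rmQ^2 \rmX (\rmD_\rho \tilde\vy)$. Two routes present themselves: the differentiation identity $\rmQ^2 = -\partial_\gamma \rmQ$ applied to the first-order form $\gamma \mapsto \tfrac{1}{n}(\rmD_\rho \tilde\vy)^\top \rmX^\top \rmQ(\gamma)\rmX (\rmD_\rho \tilde\vy)$, or the dual resolvent identity $\rmX^\top \rmQ = \tilde{\rmQ}\rmX^\top$ with $\tilde{\rmQ} = (\tfrac{1}{n}\rmX^\top \rmX + \gamma\rmI_n)^{-1}$, which recasts the form purely in terms of $\tilde{\rmQ}$ and $\tilde{\rmQ}^2$. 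Either reduces matters to second-order deterministic equivalents of the form $\tfrac{1}{n}\Tr(\rmA \bar{\rmQ}\rmB \bar{\rmQ})$, in which the Bai--Silverstein correction factor $h = 1 - \eta/(1+\gamma(1+\delta)^2)$ arises naturally. Decomposing $\rmX\rmD_\rho\tilde\vy$ into its signal component along $\vmu$ (coefficient $\tfrac{1}{n}\sum_i d_i\tilde y_i y_i$) and its orthogonal noise component (per-sample weights $d_i^2$) then produces the two families of contributions visible in $\nu_\rho$: a signal piece proportional to $(\pi_1(\lambda_- - 2\beta\varepsilon_-)+\pi_2(\lambda_+ - 2\beta\varepsilon_+))^2\|\vmu\|^2 / h$, and a noise piece equal to $(1-h)/h$ times $\pi_1\lambda_-^2+\pi_2\lambda_+^2$ together with the $4\beta^2\varepsilon_\pm(\rho_\mp-\rho_\pm)$ cross-contributions obtained by tabulating $d_i^2$ against the four $(y_i,\tilde y_i)$ groups.

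The principal technical burden is this last bookkeeping step: the four class-by-noise groups interact through the bilinear form, and the signal-mean contribution in particular picks up the factor $(\|\vmu\|^2+1)/(\|\vmu\|^2+1+\gamma(1+\delta)) - 2(1-h)$ appearing in the first summand of $\nu_\rho$ because $\vmu^\top \bar{\rmQ}\vmu$ enters twice and has to be combined with a $\bar{\rmQ}^2$-trace correction. Once the two scalar limits are established, Slutsky's theorem applied to $\vw_\rho^\top \vx = (-1)^a \vw_\rho^\top \vmu + \|\vw_\rho\|\,\zeta$ with $\zeta \sim \gN(0,1)$ independent of $\vw_\rho$ yields the stated convergence in distribution to $\gN((-1)^a m_\rho,\, \nu_\rho - m_\rho^2)$.
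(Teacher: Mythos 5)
Your proposal is correct and follows essentially the same route as the paper: random-matrix deterministic equivalents, Sherman--Morrison/leave-one-out decoupling, and the trace lemma, with the asymptotic Gaussianity obtained from the conditional law of $\vw_\rho^\top\vz$ given $\rmX$. The only organizational difference is that you split the second moment into the signal part $(\vw_\rho^\top\vmu)^2$ and the noise part $\|\vw_\rho\|^2$ (handled via $\rmQ^2=-\partial_\gamma\rmQ$ or the co-resolvent), whereas the paper computes $\E[(\vw_\rho^\top\vx)^2]$ in one pass through the deterministic equivalent of $\rmQ\Sigma_a\rmQ$ (its Lemma \ref{de QAQ}); these are interchangeable, and your explicit Slutsky step is in fact more complete than the paper's, which only records the two moments. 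One small imprecision: the per-index claim $\vx_i^\top\rmQ_{-i}\vmu \asto y_i\,\vmu^\top\bar\rmQ\vmu$ is not true term by term, since the fluctuation $\vz_i^\top\rmQ_{-i}\vmu$ has variance of order one; only the $\tfrac1n$-average over $i$ concentrates, which is all your argument actually needs.
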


\begin{figure}[t!]
\centering
\includegraphics[width = \textwidth]{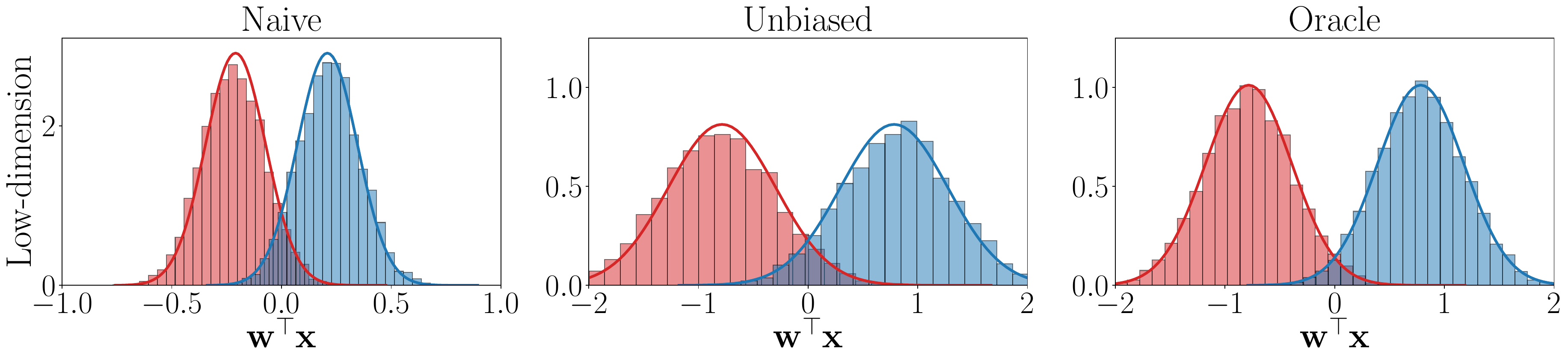}
\includegraphics[width=\textwidth]{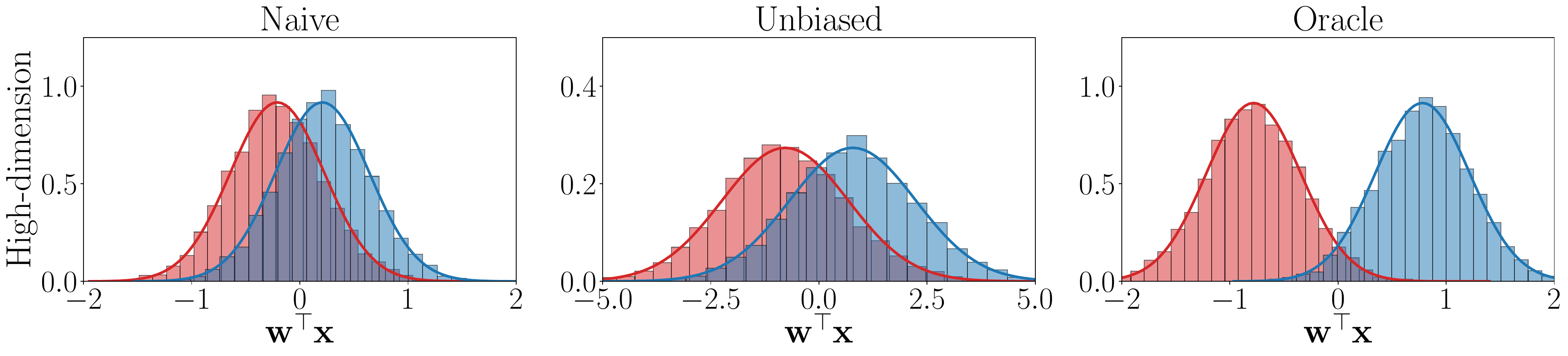}
\includegraphics[width=.8\textwidth]{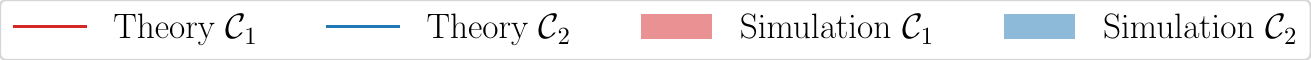}
\caption{Distribution of the decision function $\vw_\rho^\top \vx$ of different variants of LPC for $n = 5000$, $\pi_1 = \frac13 $, $\varepsilon_+ = 0.4$, $\varepsilon_- = 0.3$, $ \Vert \vmu \Vert = 2 $, $\gamma = 0.1$, $p = 50$ (first row) and $p = 1000$ (second row). The theoretical Gaussian distributions are predicted as per Theorem \ref{thm_main}. Note that the variance of the decision function for the \textit{unbiased} classifier increases with the dimension yielding poor accuracy.}
\label{fig:distribution-1}
\end{figure}

In a nutshell, Theorem \ref{thm_main} states that LPC is asymptotically equivalent to the thresholding of two monovariate Gaussian random variables with respective means $-m_\rho$ and $m_\rho$ and second moment $\nu_\rho$, where these statistics express in terms of the different parameters in our setting. Essentially, Theorem \ref{thm_main} allows us to draw interpretations on the behavior of the different classifiers described earlier. First, let us start by defining the statistics for the \textit{oracle} classifier which corresponds to setting $\rho_\pm=\varepsilon_\pm=0$, yielding:
\begin{align}
    {m_{\text{oracle}}} &= \frac{ \Vert \vmu \Vert^2 }{\Vert \vmu \Vert^2 + 1 + \gamma(1 + \delta)}, \quad \nu_{\text{oracle}} = \kappa  + {\color{red}\frac{1 - h}{h}},\\
    \text{where}\quad{\kappa} &= \frac{1}{h (\Vert \vmu \Vert^2 + 1 + \gamma (1 + \delta))} \left ( \frac{\Vert \vmu \Vert^2 + 1}{\Vert \vmu \Vert^2 + 1 + \gamma (1 + \delta)} - 2(1 - h) \right) \Vert \vmu \Vert^2.
\end{align}
Therefore, the statistics of the decision functions for the \textit{naive} ($\rho_\pm=0$) and \textit{unbiased} ($\rho_\pm=\varepsilon_\pm$) classifiers are expressed respectively as follows:
\begin{align*}
    \textit{Naive}\,
    &\begin{cases}
        m_{\text{naive}} &= (1 - 2(\pi_1 \varepsilon_- + \pi_2 \varepsilon_+ )) \cdot {m_{\text{oracle}}},\\
    \nu_{\text{naive}} &= (1 - 2(\pi_1 \varepsilon_- + \pi_2 \varepsilon_+ ))^2 \cdot {\kappa}  + {\color{red}\frac{1 - h}{h}}.
    \end{cases}\\
    \textit{Unbiased}\,
    &\begin{cases}
        m_{\text{unbiased}} &= {m_{\text{oracle}}},\\
    \nu_{\text{unbiased}} &=  {\kappa}  + {\color{red}\frac{1 - h}{h}} \left (  \pi_1(4 \beta^2 \varepsilon_-(\varepsilon_+ - \varepsilon_-) + \lambda_-^2) + \pi_2 (4 \beta^2 \varepsilon_+(\varepsilon_- - \varepsilon_+) + \lambda_+^2 )  \right).
    \end{cases}
\end{align*}

From these quantities, we can explain the behavior of the different classifiers in the low-dimensional versus high-dimensional regimes. In fact, when $n\gg p$ the dimensions ratio $\eta \to 0$ implies that $h\to 1$ as per \eqref{eq_quantities}. Therefore, in the low-dimensional setting, the \textit{unbiased} classifier statistics match those of the \textit{oracle} as expected. However, in the high-dimensional regime, i.e., when $h\neq 1$, while the \textit{unbiased} classifier remains unbiased, the second moment gets amplified due to label noise, resulting in a larger variance compared with the \textit{oracle} classifier. Indeed, we have:
\begin{align*}
    & m_{\text{unbiased}} - m_{\text{oracle}} = 0, \\
    &\nu_{\text{unbiased}} - \nu_{\text{oracle}} = {\color{red}\frac{1 - h}{h}} \left (  \pi_1(4 \beta^2 \varepsilon_-(\varepsilon_+ - \varepsilon_-) + \lambda_-^2) + \pi_2 (4 \beta^2 \varepsilon_+(\varepsilon_- - \varepsilon_+) + \lambda_+^2 ) - 1  \right) \neq 0.
\end{align*}

This behavior is highlighted in Figure \ref{fig:distribution-1} which depicts the histogram of the decision function for the different classifiers along with the theoretical Gaussian distributions as per Theorem \ref{thm_main}, in both the low-dimensional and high-dimensional settings. Moreover, having characterized the distribution of the decision function of $\vw_\rho$ allows us to estimate its generalization performance such as the test accuracy $\gA_{\text{test}}$ and test risk $\gR_{\text{test}}$ which are defined respectively, for a test set $\left( \vx_i^{\text{test}}, y_i^{\text{test}} \right)_{i\in [n_{\text{test}}]}$ independent from the training set with $y_i^{\text{test}}$ being correct labels, as follows:
\begin{align}
    \gA_{\text{test}} = \frac{1}{n_{\text{test}}} \sum_{i=1}^{n_{\text{test}}} \vone\{ \sign(\vw_\rho^\top \vx_i^{\text{test}} ) = y_i^{\text{test}}  \}, \quad \gR_{\text{test}} = \frac{1}{n_{\text{test}}} \sum_{i=1}^{n_{\text{test}}} \left( \vw_\rho^\top \vx_i^{\text{test}}  - y_i^{\text{test}} \right)^2.
\end{align}
Essentially, we have the following proposition under Assumption \ref{assum_growth_rate} and taking $n_{\text{test}}\to \infty$.

\begin{figure}[t!]
    \centering
    \includegraphics[width = \textwidth]{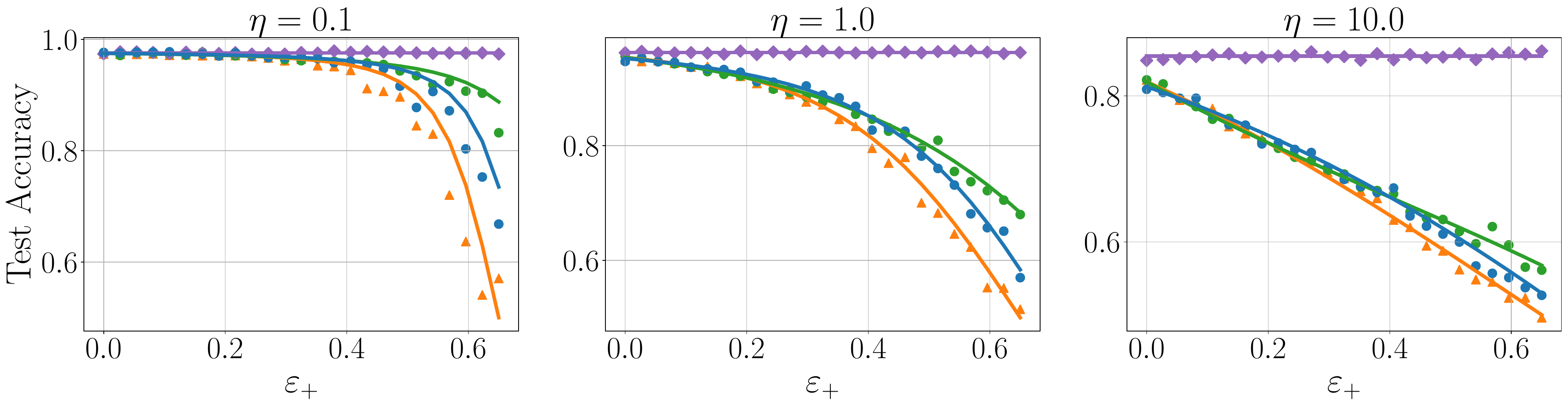}
    \includegraphics[width = \textwidth]{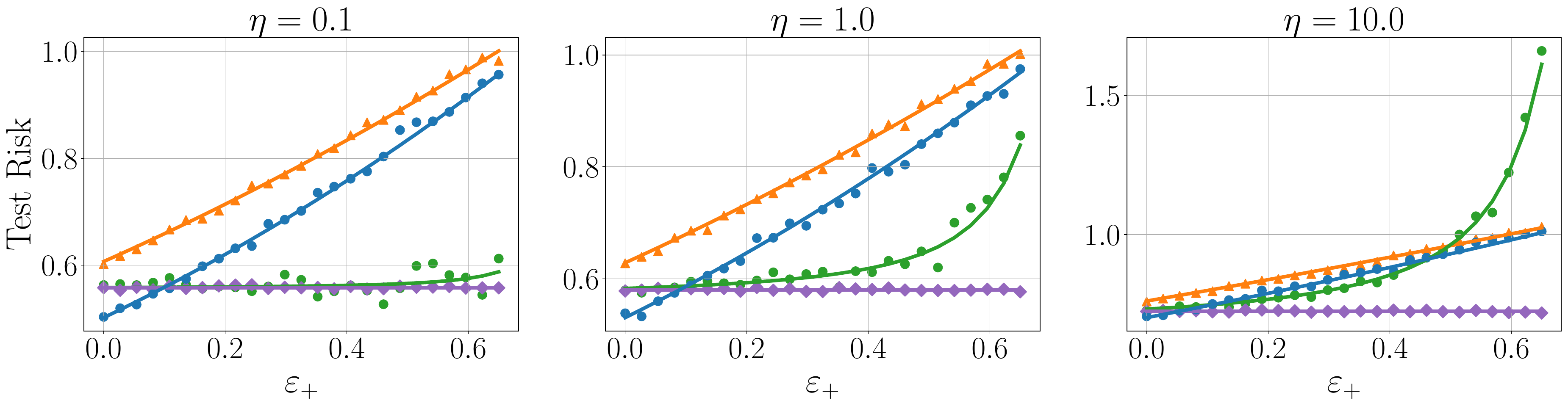}
    \includegraphics[width = .8\textwidth]{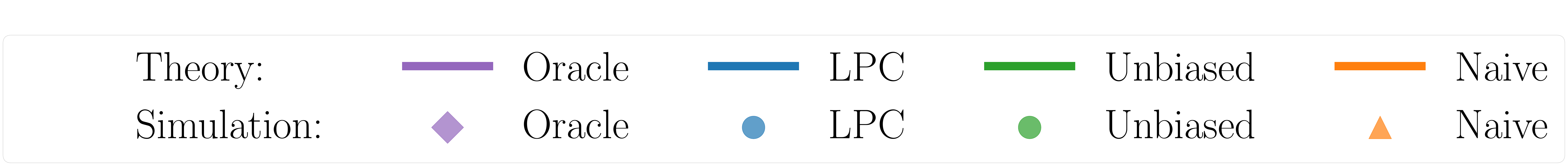}
    \caption{Test performance (accuracy and risk) of different LPC variants in terms of the positive noise rate $\varepsilon_+$. We considered $n = 100$, $\pi_1 = \frac13$, $\varepsilon_- = 0.2$, $\Vert \vmu \Vert = 2$, $\gamma = 10$, $\rho_+ = 0.2$ and $\rho_- = 0$ (for LPC in blue). The theoretical curves are obtained as per Proposition \ref{prop:test-accuracy}. We notice that the effect of label noise is more important in high-dimension, i.e., large values of $\eta$. }
    \label{fig:metric-vs-epsilon}
\end{figure}


\begin{proposition}[Asymptotic test accuracy \& risk of LPC]
\label{prop:test-accuracy}
The asymptotic test accuracy and risk of LPC $\vw_\rho$ in \eqref{w_imp}, under Assumption \ref{assum_growth_rate} and as $n_{\text{test}}\to \infty$, are respectively given by:
\begin{align*}
    \gA_{\text{test}} \asto 1 - \varphi\left( (\nu_\rho - m_\rho^2)^{-\frac12} m_\rho \right), \quad \gR_{\text{test}} \asto 1 - 2 m_\rho + \nu_\rho.
\end{align*}
where $m_\rho, \nu_\rho$ are defined in Theorem \ref{thm_main} and $\varphi(x) = \frac{1}{ \sqrt{2\pi} } \int_x^{+\infty} e^{-\frac{t^2}{2}} \mathrm{dt} $. 
\end{proposition}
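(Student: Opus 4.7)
}
The plan is to deduce both statements directly from the Gaussianity statement of Theorem \ref{thm_main}, by first freezing the training set (so that $\vw_\rho$ is fixed) and applying a law of large numbers over the $n_{\text{test}}$ i.i.d.\ test samples, then passing to the high-dimensional limit $p,n\to \infty$ using the convergence in distribution provided by Theorem \ref{thm_main}.

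First I would treat the accuracy. Conditionally on $\rmX,\tilde\vy$, the summands in $\gA_{\text{test}}$ are i.i.d.\ Bernoulli random variables, so by the strong law of large numbers,
\begin{align*}
\gA_{\text{test}} \;\asto\; \pi_1\, \mathbb{P}\bigl(\vw_\rho^\top \vx < 0 \mid \vx\in\gC_1\bigr) \;+\; \pi_2\, \mathbb{P}\bigl(\vw_\rho^\top \vx > 0 \mid \vx\in\gC_2\bigr),
\end{align*}
as $n_{\text{test}}\to \infty$. Then, by Theorem \ref{thm_main}, the conditional law of $\vw_\rho^\top\vx$ given $\vx\in \gC_a$ converges weakly to $\gN((-1)^a m_\rho, \nu_\rho - m_\rho^2)$, whose cumulative distribution function is continuous at $0$. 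By the portmanteau lemma, both class-conditional misclassification probabilities converge, for $a=1,2$, to $\varphi\bigl(m_\rho/\sqrt{\nu_\rho - m_\rho^2}\bigr)$, giving
\begin{align*}
\gA_{\text{test}} \;\asto\; 1 - \varphi\!\left(\frac{m_\rho}{\sqrt{\nu_\rho - m_\rho^2}}\right).
\end{align*}

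For the risk, the same SLLN argument gives
\begin{align*}
\gR_{\text{test}} \;\asto\; \pi_1 \mathbb{E}\bigl[(\vw_\rho^\top\vx + 1)^2 \mid \vx\in\gC_1\bigr] + \pi_2 \mathbb{E}\bigl[(\vw_\rho^\top\vx - 1)^2 \mid \vx\in\gC_2\bigr].
\end{align*}
Expanding the squares yields $1 - 2\,\mathbb{E}[y\,\vw_\rho^\top\vx] + \mathbb{E}[(\vw_\rho^\top\vx)^2]$. Under the Gaussian limit of Theorem \ref{thm_main}, the first conditional moment is $(-1)^a m_\rho$ and the second conditional moment is exactly $\nu_\rho$ in either class (since the variance is $\nu_\rho - m_\rho^2$ and the mean squared is $m_\rho^2$), so $\mathbb{E}[y\,\vw_\rho^\top\vx]\to m_\rho$ and $\mathbb{E}[(\vw_\rho^\top\vx)^2]\to \nu_\rho$, giving $\gR_{\text{test}}\asto 1 - 2m_\rho + \nu_\rho$.

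The main obstacle is the second step for the risk: convergence in distribution does not in general imply convergence of first and second moments, so one needs uniform integrability of $\vw_\rho^\top \vx$ and $(\vw_\rho^\top\vx)^2$ in the high-dimensional limit. Because $\vx\mid\gC_a\sim \gN(\vmu_a,\rmI_p)$ is Gaussian and $\vw_\rho$ is independent of $\vx$, conditionally on the training set $\vw_\rho^\top\vx$ is a single Gaussian whose mean $\vw_\rho^\top\vmu_a$ and variance $\|\vw_\rho\|^2$ are bounded quantities that can be controlled via the deterministic equivalent of Lemma \ref{DEQ} (these are precisely the quantities whose limits feed into $m_\rho$ and $\nu_\rho$ in Theorem \ref{thm_main}). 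Hence any $(2+\epsilon)$-moment of $\vw_\rho^\top\vx$ stays bounded with high probability, which yields uniform integrability and justifies the moment convergence. For the accuracy, no such issue arises since the indicator function is bounded and the Gaussian limit law is continuous at $0$.
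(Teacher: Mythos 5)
Your proposal is correct and follows the route the paper intends: the proposition is a direct corollary of Theorem \ref{thm_main}, obtained by a law of large numbers over the test set and then plugging in the class-conditional Gaussian statistics, noting that both classes give the same misclassification probability $\varphi\bigl(m_\rho/\sqrt{\nu_\rho-m_\rho^2}\bigr)$ and the same conditional second moment $\nu_\rho$. Your added remark on uniform integrability for the risk is a sensible technical precaution, and is consistent with the paper's derivation, which in fact computes $m_\rho$ and $\nu_\rho$ directly as limits of the first and second moments of the decision function rather than only as parameters of the weak limit.
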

Figure \ref{fig:metric-vs-epsilon} depicts both the empirical and theoretical test performance of LPC and its different variants, where we essentially notice a very accurate matching between simulation and the theoretical predictions as per Proposition \ref{prop:test-accuracy}, even for a finite sample size. See Figure \ref{fig:accuracy-gamma} in the Appendix for more plots varying other parameters.
In fact, even though we work under an asymptotic regime, our estimation of $\gA_{\text{test}}$ and $\gR_{\text{test}}$ by their asymptotic counterparts is consistent, as it can be shown that their fluctuations are of order $n^{-\frac12}$ under Assumption \ref{assum_growth_rate}, this is a consequence of the concentration results of the resolvent $\rmQ$ as shown in \citep{louart2018concentration}.

 Interestingly, when observing the asymptotic test accuracy in terms of $\rho_+$ and $\rho_-$ as depicted in Figure \ref{fig:comp-rho-epsilon}, we remarkably find that the accuracy is maximized for any fixed $\rho_-$ at some value $\rho_+^*(\rho_-)$, and the maximum accuracy is higher than the \textit{unbiased} accuracy in high-dimension. Moreover, since $\varphi(\cdot)$ is monotonous, such maximizer can be obtained analytically by maximizing the ratio $(\nu_\rho - m_\rho^2)^{-\frac12} m_\rho$ as derived in Appendix \ref{appendix_maximal_rho}, which yields the following closed-form expression:
\begin{align}\label{eq_optimal_rho}
    \boxed{\rho_+^*(\rho_-) = \frac{\pi_1^2 \varepsilon_-(\varepsilon_- - 1) + \pi_2^2 \varepsilon_+(1 - \varepsilon_+)}{\pi_1 \pi_2(1 - \varepsilon_+ - \varepsilon_-)} + \rho_-.}
\end{align}
Therefore, our \textit{optimized classifier} is defined by taking $\rho_-=0$ and $\rho_+ = \rho_+^*(0)$ in the expression of $\vw_\rho$ as per \eqref{w_imp}. We notably notice that $\rho_+^*$ depends solely on the noise probabilities $\varepsilon_\pm$ and the class proportions $\pi_1$ and $\pi_2$, especially, it does not involve the SNR $\Vert \vmu\Vert$, the regularization $\gamma$ and the dimension ratio $\eta$ which is quite unexpected. We also notice that the worst performance of LPC with parameters $\bar\rho_+, \bar\rho_-$ (again $\bar\rho_-$ can be fixed to $0$) corresponds to the one of a random guess and can be derived by solving $m_\rho=0$ which yields (for $\pi_1\neq \frac12$):
\begin{align}
    \bar \rho_+(\rho_-) =  \frac{1 - 2 \pi_1 \varepsilon_- - 2\pi_2 \varepsilon_+}{2 \pi_1 - 1} +  \rho_-.
\end{align}

\begin{figure}[t!]
    \centering
    \includegraphics[width = \textwidth]{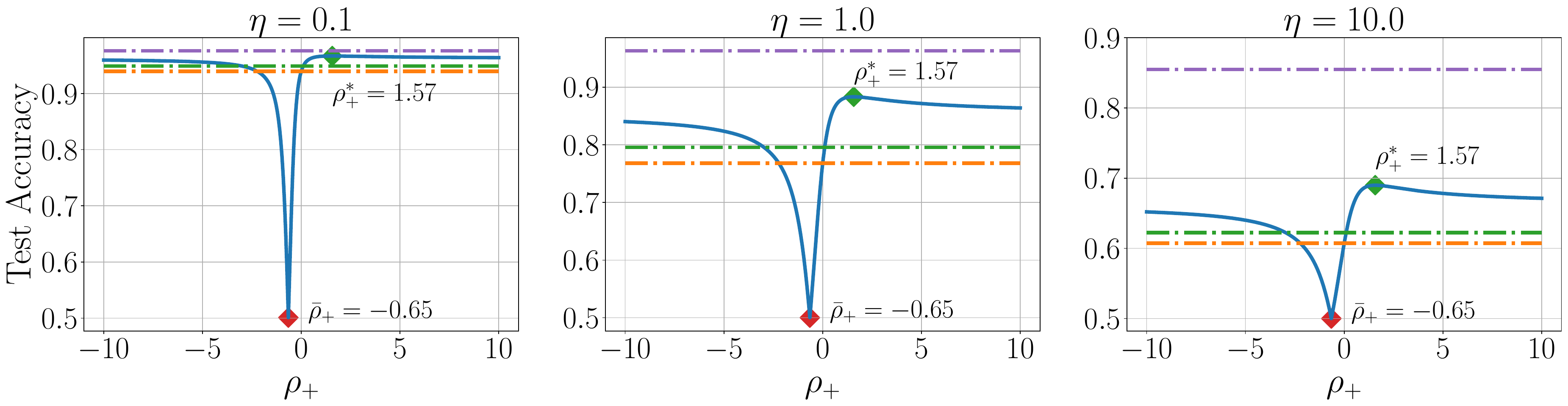}
    
    \includegraphics[width = 0.85\textwidth]{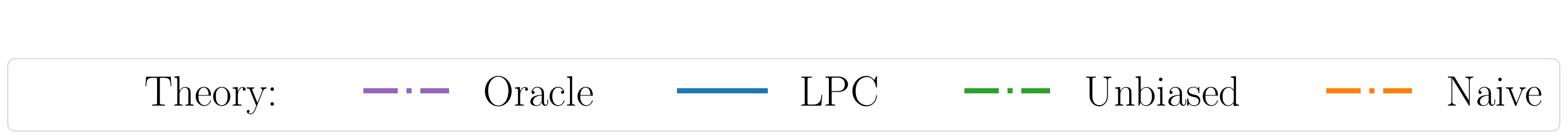}
    \caption{Test accuracy of LPC by fixing $\rho_-=0$ and varying $\rho_+$. We considered $n = 1000$, $\pi_1 = 0.3$, $\Vert \vmu \Vert = 2$, $\varepsilon_+ = 0.4$, $\varepsilon_- = 0.3$ and optimal $\gamma$. We notice that the test accuracy is maximized at $\rho_+^*$ yielding better accuracy compared with the \textit{unbiased} approach. Note that for small values of $\eta$, i.e., for low dimensions, the test accuracy becomes flat in terms of $\rho_+$ and in the limit $\eta\to 0$ the maximizer $\rho_+^*$ is not identifiable as discussed in Remark \ref{remark_infinie_samples}.}
    \label{fig:comp-rho-epsilon}
\end{figure}

\begin{remark}[On the relevance of the RMT analysis]\label{remark_infinie_samples}
    Our RMT analysis relies on the main assumption that both $p$ and $n$ are large and comparable as per Assumption \ref{assum_growth_rate}. This assumption is in fact fundamentally crucial for exhibiting the maximizer $\rho_+^*$ defined above. Indeed, supposing an infinite sample size setting where $p$ is fixed while taking only $n\to \infty$ or alternatively $h\to 1$, would result in $(\nu_\rho - m_\rho^2)^{-\frac12} m_\rho \to \Vert \vmu \Vert$. Therefore, the existence of $\rho_+^*$ is only tractable under the large dimensional setting, which motivates the importance of this assumption.
\end{remark}
\subsection{Estimation of label noise probabilities}\label{sec_noise_estimation}

Another important aspect of our \textit{optimized} classifier is the fact that it supposes the prior knowledge of the noise probabilities $\varepsilon_\pm$ which is also the case for the \textit{unbiased} classifier of \citep{natarajan2018cost}. In this section, based on our theoretical derivations, we propose a simple procedure for estimating $\varepsilon_\pm$ by supposing that the SNR $\Vert \vmu \Vert$ and the class proportions $\pi_1, \pi_2$ are known, in fact the latest can be consistently estimated with very few training samples as described in \citep{tiomoko2021deciphering}.

To estimate $\varepsilon_\pm$, we rely on the expression of the second moment $\nu_\rho = \nu_\rho(\varepsilon_+, \varepsilon_-)$ as per Theorem \ref{thm_main}, by viewing it as a function of $\varepsilon_\pm$. Specifically, we consider two different arbitrary couples $\rho_1 = (\rho_+^1, \rho_-^1)$ and $\rho_2 = (\rho_+^2, \rho_-^2)$ and solve the system:
\begin{equation}
    \begin{cases}
        \hat \nu_{\rho_1} = \nu_{\rho_1} (\varepsilon_+, \varepsilon_-),\\
        \hat \nu_{\rho_2} = \nu_{\rho_2} (\varepsilon_+, \varepsilon_-).
    \end{cases}
\end{equation}
where $\hat \nu_{\rho} = \frac1n \sum_{i=1}^n ( \vx_i^\top\vw_\rho^{-i} )^2 $ is the empirical estimate of $\nu_\rho$ and $\vw_\rho^{-i}$ corresponds to LPC trained on all examples except $\vx_i$, which discards the statistical dependencies. Figure \ref{fig:epsilon-estimation} depicts the estimated versus ground truth value of $\varepsilon_+$ and shows consistent estimation for different values of the SNR $\Vert \vmu \Vert$.

\section{Experiments with real data}\label{sec_experiments}

In this section, we present experiments with real data to validate our approach. We use the Amazon review dataset \citep{blitzer2007biographies} which includes several binary classification tasks corresponding to positive versus negative reviews of \texttt{books}, \texttt{dvd}, \texttt{electronics} and \texttt{kitchen}. We apply the standard scaler from \texttt{sklearn} \citep{pedregosa2011scikit} and estimate $\Vert \vmu \Vert$ with the normalized data. Figure \ref{fig:distribution-book} depicts the histogram of the decision function of different LPC variants (\textit{Naive}, \textit{Unbiased} and \textit{Optimized}) along with the theoretical distribution as predicted by Theorem \ref{thm_main}. We notably observe a reasonable match between the empirical histograms and the theoretical predictions which validates our results and assumptions even on real data. Note that, even though we considered a Gaussian mixture model, our results extend beyond this assumption as we discussed in Remark \ref{remark_distribution}. In fact, our results can be derived under the more general setting of concentrated random vectors \citep{louart2018concentration} which typically accounts GAN generated data \citep{seddik2020random}. 

From a practical standpoint, we highlight that we estimate the SNR $\Vert \vmu \Vert$ on the real data only for plotting the theoretical distributions in Figure \ref{fig:distribution-book}. In fact, our \textit{optimized} classifier does not require the knowledge of $\Vert \vmu \Vert$ since $\rho_+^*$ depends only on the class proportions $\pi_a$'s and the noise probabilities $\varepsilon_\pm$ as per \eqref{eq_optimal_rho}.
However, if the latest quantities are unknown, one can estimate them as we discussed in the previous section and therefore the knowledge of $\Vert \vmu \Vert$ is required, but can also be consistently estimated with few data samples as discussed earlier. 
Moreover, as theoretically anticipated, the \textit{optimized classifier} outperforms the \textit{naive} and \textit{unbiased} classifiers in terms of accuracy. Table \ref{accuracy-table} shows the performance in terms of classification accuracy of the different classifiers, on different datasets and varying the noise probabilities. We clearly observe that the \textit{optimized} approach yields spectacular performances which are almost close to the \textit{oracle} that assumes perfect knowledge of the true labels, even under a high noise regime. The code is provided in the supplementary material for reproducibility of our empirical results.

\begin{figure}[t!]
\centering
\includegraphics[width=\textwidth]{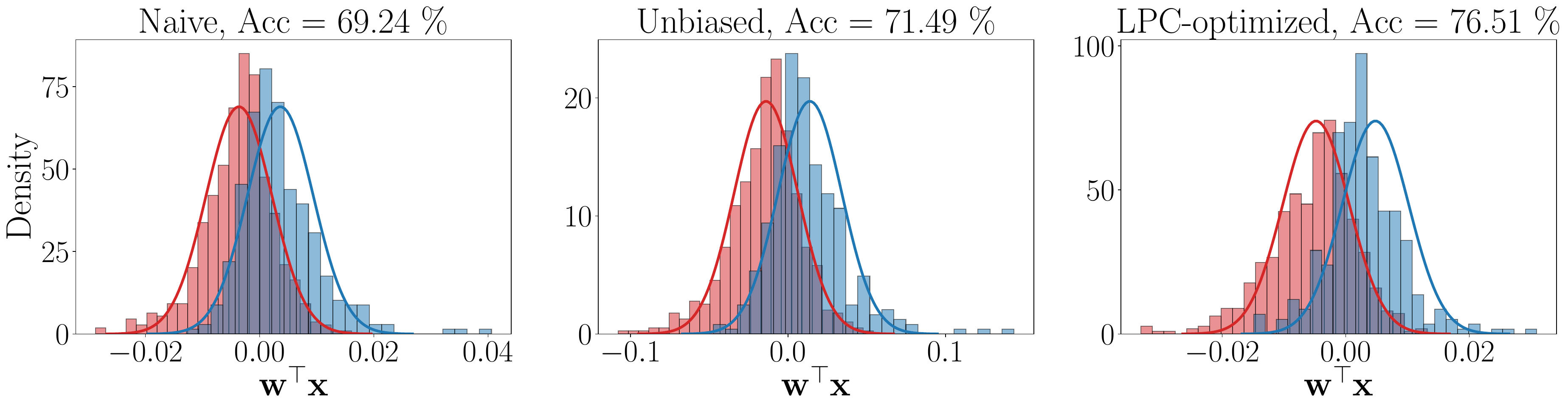}
\includegraphics[width=.8\textwidth]{img/legend_1.pdf}
\caption{Histogram of the decision function of different LPC variants on the \texttt{books} dataset \citep{blitzer2007biographies}, along with the theoretical distribution as predicted by Theorem \ref{thm_main}. We considered $n = 1600$, $p = 400$, $\pi_1 = 0.3$, $\varepsilon_+ = 0.4$, $\varepsilon_- = 0.3$ and optimal $\gamma$.}
\label{fig:distribution-book}
\end{figure}

\begin{table}[t!]
  \caption{Accuracy comparison over Amazon review datasets \citep{blitzer2007biographies} for $ n = 1600$, $ p = 400$, $\pi_1 = 0.3$, $\varepsilon_- = 0.4$ and optimal $\gamma$. As theoretically anticipated, our \textit{optimized} approach yields better classification accuracy even approaching \textit{oracle} trained with the true labels.}
  \label{accuracy-table}
  \vspace{.1cm}
  \centering
  \begin{tabular}{llllll}
    \toprule
    \textbf{$\varepsilon_+$} & \textbf{Sub-Dataset}     & \textbf{Naive} (\%)     & \textbf{Unbiased} (\%)      & \textbf{Optimized} (\%)  &
    \textbf{Oracle} (\%) \\
    \midrule
    0.3 & \texttt{Books} & 72.69 $\pm$ 0.11   & 71.66 $\pm$ 0.25 &  $\mathcolorbox{green!50}{76.36 \pm 0.21}$  & {\color{gray} 78.78 $\pm$ 0.07} \\
     & \texttt{Dvd} & 73.75 $\pm$ 0.42 & 72.24 $\pm$ 0.3 & $\mathcolorbox{green!50}{77.43 \pm 0.04}$ & {\color{gray} 80.57 $\pm$ 0.12} \\
     &\texttt{Electronics} & 78.22 $\pm$ 0.05 & 77.22 $\pm$ 0.09 & $\mathcolorbox{green!50}{81.57 \pm 0.12}$ & {\color{gray} 83.22 $\pm$ 0.09} \\
    & \texttt{Kitchen} &  79.64 $\pm$ 0.07 & 78.62 $\pm$ 0.05 & $\mathcolorbox{green!50}{82.17 \pm 0.06}$  & {\color{gray} 84.28 $\pm$ 0.1}\\ \midrule

    0.4 & \texttt{Books} & 66.84 $\pm$ 0.31 & 66.68 $\pm$ 0.22 &   $\mathcolorbox{green!50}{75.69 \pm 0.22}$ & {\color{gray} 78.78 $\pm$ 0.07}\\
     & \texttt{Dvd} & 67.2 $\pm$ 0.37 & 67.33 $\pm$ 0.34 & $\mathcolorbox{green!50}{76.86 \pm 0.16}$ & {\color{gray} 80.57 $\pm$ 0.12}  \\
     &\texttt{Electronics} & 72.13 $\pm$ 0.18 & 72.36 $\pm$ 0.06 & $\mathcolorbox{green!50}{81.04 \pm 0.08}$ & {\color{gray} 83.22 $\pm$ 0.09} \\
    & \texttt{Kitchen} & 73.46 $\pm$ 0.29 & 73.85 $\pm$ 0.23 & $\mathcolorbox{green!50}{81.65 \pm 0.17}$ & {\color{gray} 84.28 $\pm$ 0.1} \\
    \midrule
    
    0.5 & \texttt{Books} & 55.37 $\pm$ 0.25 & 59.5 $\pm$ 0.43 & $\mathcolorbox{green!50}{75.26 \pm 0.19}$ & {\color{gray} 78.78 $\pm$ 0.07} \\
     & \texttt{Dvd} & 55.32 $\pm$ 0.41 & 59.68 $\pm$ 0.57 & $\mathcolorbox{green!50}{76.42 \pm 0.13}$ & {\color{gray} 80.57 $\pm$ 0.12} \\
     &\texttt{Electronics} & 57.96 $\pm$ 0.11 & 63.21 $\pm$ 0.36 & $\mathcolorbox{green!50}{80.73 \pm 0.01 }$ & {\color{gray} 83.22 $\pm$ 0.09} \\
    & \texttt{Kitchen} & 58.15 $\pm$ 0.61 & 64.71 $\pm$ 0.7 & $\mathcolorbox{green!50}{81.32 \pm 0.11}$ & {\color{gray} 84.28 $\pm$ 0.1} \\
    
    \bottomrule
  \end{tabular}
\end{table}

\section{Conclusion \& future directions}\label{sec_conclusion}

This paper introduced new insights into learning with noisy labels in high dimensions. Relying on tools from random matrix theory, we provided an asymptotic characterization of the performance of the introduced classifier which accounts for label noise through scalar quantities. 
Based on this analysis, we identified that the low-dimensional intuitions to handle label noise do not extend to high-dimension and developed a new approach that is proven to be more efficient by design. 
We also showed through empirical evidence that our approach yields improved performance on real data.

In our current investigation, we restricted our analysis to the cases of squared loss and binary classification. Our results can be extended beyond these settings by accounting for a general bounded loss function $\ell(s, y)$ and multi-class classification problems. We provide in Appendix \ref{appendix_bce_loss} some experiments with synthetic and real data using the binary-cross-entropy loss function that show similar behavior to the squared loss (see Figures \ref{fig:bce-loss-synthetic} and \ref{fig:bce-loss-real}), namely, the existence of an optimum $\rho_\pm^*$ that outperforms the \textit{unbiased} approach in high dimensions. The extension of our study to this setting can be performed by leveraging the empirical risk minimization framework \citep{el2013robust, mai2019high} which allows the RMT analysis of general loss functions. Moreover, as we provided in Appendix \ref{appendix_multi_class}, our results extend to a $k$-class classification setting where we empirically show improved performance by optimizing a set of $2k$ scalar parameters (which play the same role as $\rho_\pm$ of the binary case). Such extension is straightforward in the case of squared loss given our current results and will be addressed in future work.

%

\bibliographystyle{bibstyle}
\bibliography{ref.bib}

\appendix
\newpage

\section*{Appendix}
This appendix is organized as follows: Section \ref{appendix_lemmas} lists some useful lemmas that will be at the core of our analysis. 
In Section \ref{appendix_rmt_analysis}, we provide a more general result of Theorem \ref{thm_main} as discussed in Remark \ref{remark_distribution} along with the main proof derivations using RMT.
Section \ref{appendix_plots} provides additional plots to support our theoretical results.
Section \ref{appendix_maximal_rho} provides derivations for finding the optimal parameter $\rho^*_+$ which defines our optimized classifier.
In Section \ref{appendix_bce_loss} we provide some experiments with synthetic and real data to support the extension of our analysis to arbitrary loss functions instead of the squared loss as supposed in the main paper.
Finally, Section \ref{appendix_multi_class} presents experiments showing that our analysis can be further extended to multi-class classification which is left for a future investigation.

\section{Useful lemmas}\label{appendix_lemmas}

The following lemmas will be useful in the calculus introduced in this section.

\begin{lemma}[Resolvent identity]\label{resolvent-identity}
    For invertible matrices $\rmA$ and $\rmB $, we have:
    \begin{equation*}
        \rmA^{-1} - \rmB^{-1} = \rmA^{-1}(\rmB - \rmA)\rmB^{-1}.
    \end{equation*}
\end{lemma}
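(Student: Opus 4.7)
The plan is to verify the identity by direct algebraic manipulation of the right-hand side. First I would expand the product $\rmA^{-1}(\rmB - \rmA)\rmB^{-1}$ by distributing the middle factor, producing $\rmA^{-1}\rmB\rmB^{-1} - \rmA^{-1}\rmA\rmB^{-1}$. Then the defining identities of matrix inverses, $\rmB\rmB^{-1} = \rmI$ and $\rmA^{-1}\rmA = \rmI$, collapse each term to yield $\rmA^{-1} - \rmB^{-1}$, which matches the left-hand side.

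An equivalent path starts from the left-hand side: write $\rmA^{-1} - \rmB^{-1} = \rmA^{-1} \rmI - \rmI \rmB^{-1}$ and then insert $\rmB \rmB^{-1}$ into the first term and $\rmA^{-1}\rmA$ into the second to obtain $\rmA^{-1}\rmB\rmB^{-1} - \rmA^{-1}\rmA\rmB^{-1} = \rmA^{-1}(\rmB - \rmA)\rmB^{-1}$. Either direction amounts to a single line of algebra.

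The only subtlety worth flagging is that matrix multiplication is non-commutative, so the ordering of factors must be preserved throughout; in particular, $\rmA^{-1}$ must sit to the left of $(\rmB - \rmA)$ and $\rmB^{-1}$ to the right, and this asymmetry is essential. Beyond that, there is no real obstacle: the identity holds for any pair of invertible matrices, and more generally for invertible elements of any associative ring with identity, requiring only associativity, distributivity, and the two-sided inverse property.
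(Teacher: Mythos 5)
Your proof is correct: expanding $\rmA^{-1}(\rmB - \rmA)\rmB^{-1} = \rmA^{-1}\rmB\rmB^{-1} - \rmA^{-1}\rmA\rmB^{-1} = \rmA^{-1} - \rmB^{-1}$ is the standard one-line verification, and the paper itself states this lemma without proof precisely because this is immediate. Your remarks on non-commutativity and on the identity holding in any associative unital ring are accurate, if more than is needed here.
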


\begin{lemma}[Sherman-Morisson]
\label{Sherman-Morisson}
For $\rmA \in \sR^{p \times p}$ invertible and $\vu, \vv \in \sR^p$, $\rmA  + \vu \vv^\top $ is invertible if and only if:  $1 + \vv^\top \rmA^{-1} \vu \neq 0$, and:
\begin{equation*}
    (\rmA + \vu \vv^\top)^{-1} = \rmA^{-1} - \frac{\rmA^{-1} \vu \vv^\top \rmA^{-1}}{1 + \vv^\top \rmA^{-1} \vu}.
\end{equation*}
Besides,
\begin{equation*}
    (\rmA + \vu \vv^\top)^{-1} \vu = \frac{\rmA^{-1} \vu}{1 + \vv^\top \rmA^{-1} \vu}.
\end{equation*}
\end{lemma}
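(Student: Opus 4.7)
The plan is to prove this classical rank-one inverse update by direct verification, treating the invertibility criterion and the formula itself as two distinct steps, and then extracting the displayed identity for $(\rmA+\vu\vv^\top)^{-1}\vu$ as an immediate corollary.

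First, I would establish the invertibility equivalence. Suppose $\rmA+\vu\vv^\top$ is singular, so there exists $\vx\neq 0$ with $\rmA\vx = -\vu(\vv^\top\vx)$. Multiplying by $\rmA^{-1}$ gives $\vx = -(\vv^\top\vx)\rmA^{-1}\vu$, and then taking the inner product with $\vv$ yields $(1+\vv^\top\rmA^{-1}\vu)(\vv^\top\vx)=0$. If $1+\vv^\top\rmA^{-1}\vu\neq 0$, this forces $\vv^\top\vx=0$, hence $\rmA\vx=0$, contradicting invertibility of $\rmA$. Conversely, if $1+\vv^\top\rmA^{-1}\vu=0$ (which implicitly requires $\vu\neq 0$, otherwise the criterion is $1\neq 0$ and $\rmA+\vu\vv^\top=\rmA$ is trivially invertible), take $\vx=\rmA^{-1}\vu\neq 0$, and compute $(\rmA+\vu\vv^\top)\vx = \vu + \vu(\vv^\top\rmA^{-1}\vu) = \vu(1+\vv^\top\rmA^{-1}\vu)=0$, so $\rmA+\vu\vv^\top$ is singular.

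Next I would verify the closed-form inverse. Writing $\alpha = \vv^\top\rmA^{-1}\vu$ and $\rmB = \rmA^{-1} - \frac{\rmA^{-1}\vu\vv^\top\rmA^{-1}}{1+\alpha}$, I would expand
\begin{align*}
(\rmA+\vu\vv^\top)\rmB &= \rmI - \frac{\vu\vv^\top\rmA^{-1}}{1+\alpha} + \vu\vv^\top\rmA^{-1} - \frac{\alpha\,\vu\vv^\top\rmA^{-1}}{1+\alpha} \\
&= \rmI + \vu\vv^\top\rmA^{-1}\Bigl(-\tfrac{1}{1+\alpha} + 1 - \tfrac{\alpha}{1+\alpha}\Bigr) = \rmI,
\end{align*}
since the scalar in the parentheses collapses to $\frac{-1+(1+\alpha)-\alpha}{1+\alpha}=0$. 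A symmetric computation shows $\rmB(\rmA+\vu\vv^\top)=\rmI$, so $\rmB$ is the two-sided inverse.

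Finally, applying the established formula to the vector $\vu$ gives
\begin{equation*}
(\rmA+\vu\vv^\top)^{-1}\vu = \rmA^{-1}\vu - \frac{\rmA^{-1}\vu\,(\vv^\top\rmA^{-1}\vu)}{1+\alpha} = \rmA^{-1}\vu\Bigl(1 - \tfrac{\alpha}{1+\alpha}\Bigr) = \frac{\rmA^{-1}\vu}{1+\vv^\top\rmA^{-1}\vu},
\end{equation*}
which is the second displayed identity. There is no real obstacle here beyond careful algebraic bookkeeping; the only subtlety is handling the edge case $\vu=0$ in the converse direction of the invertibility argument, which I would dispatch by noting that the statement reduces to the trivially true $1\neq 0$ and $(\rmA+\vu\vv^\top)^{-1}=\rmA^{-1}$.
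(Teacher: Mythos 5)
Your proof is correct and complete: the contrapositive argument for the invertibility criterion, the direct two-sided verification that $\rmB=(\rmA+\vu\vv^\top)^{-1}$, and the specialization to $\vu$ are all sound, and you handle the $\vu=\vzero$ edge case properly. The paper itself states this Sherman--Morrison lemma as a classical fact without proof, so there is no argument to compare against; your direct verification is the standard textbook derivation and fills that gap adequately.
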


\begin{lemma}[Relevant Identities]
\label{mu-identities}
    Let $\bar \rmQ \in \sR^{p \times p} $ be the deterministic matrix defined in lemma \ref{DEQ}. If $\rmC_a = \rmI_p$, then we have:
    \begin{align*}
        \vmu^\top \bar \rmQ \vmu = \frac{(1 + \delta) \Vert \vmu \Vert^2}{\Vert \vmu \Vert^2 + 1 + \gamma (1 + \delta) },\quad \vmu^\top \bar \rmQ^2 \vmu = \left (\frac{(1 + \delta) \Vert \vmu \Vert}{\Vert \vmu \Vert^2 + 1 + \gamma (1 + \delta)} \right)^2.
    \end{align*}
\end{lemma}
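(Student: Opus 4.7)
The plan is to reduce both identities to a single application of the Sherman--Morrison formula (Lemma \ref{Sherman-Morisson}), since, when $\rmC_a = \rmI_p$, the deterministic equivalent $\bar\rmQ$ depends on $\vmu$ only through the rank-one perturbation $\vmu\vmu^\top$. Explicitly, I would start by writing
\begin{equation*}
\bar\rmQ^{-1} \;=\; \frac{\vmu\vmu^\top}{1+\delta} \,+\, \alpha\,\rmI_p, \qquad \alpha \,=\, \frac{1}{1+\delta} + \gamma \,=\, \frac{1+\gamma(1+\delta)}{1+\delta},
\end{equation*}
so that $\bar\rmQ = \bigl(\alpha\rmI_p + \tfrac{1}{1+\delta}\vmu\vmu^\top\bigr)^{-1}$ is a rank-one update of a multiple of the identity.

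Next, I would apply Lemma \ref{Sherman-Morisson} with $\rmA = \alpha\rmI_p$, $\vu = \tfrac{1}{1+\delta}\vmu$, and $\vv = \vmu$, which gives, after simplification,
\begin{equation*}
\bar\rmQ \;=\; \frac{1}{\alpha}\rmI_p \,-\, \frac{\vmu\vmu^\top}{\alpha\bigl(\alpha(1+\delta) + \Vert\vmu\Vert^2\bigr)} \;=\; \frac{1}{\alpha}\rmI_p \,-\, \frac{\vmu\vmu^\top}{\alpha\bigl(\Vert\vmu\Vert^2 + 1 + \gamma(1+\delta)\bigr)},
\end{equation*}
using $\alpha(1+\delta) = 1 + \gamma(1+\delta)$. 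The key intermediate observation, which I would record explicitly, is the ``eigenvalue'' relation
\begin{equation*}
\bar\rmQ\,\vmu \;=\; \frac{1+\delta}{\Vert\vmu\Vert^2 + 1 + \gamma(1+\delta)}\,\vmu,
\end{equation*}
obtained either directly from the Sherman--Morrison identity above or from the second formula in Lemma \ref{Sherman-Morisson}, namely $(\rmA+\vu\vv^\top)^{-1}\vu = \rmA^{-1}\vu/(1+\vv^\top\rmA^{-1}\vu)$.

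The first identity then follows by taking the inner product of this vector with $\vmu$, yielding $\vmu^\top\bar\rmQ\vmu = (1+\delta)\Vert\vmu\Vert^2/(\Vert\vmu\Vert^2 + 1 + \gamma(1+\delta))$. The second identity follows from the symmetry of $\bar\rmQ$, since $\vmu^\top\bar\rmQ^2\vmu = \Vert\bar\rmQ\vmu\Vert^2$, and $\bar\rmQ\vmu$ is a scalar multiple of $\vmu$ by the eigenvalue relation; squaring that scalar and multiplying by $\Vert\vmu\Vert^2$ gives exactly $\bigl((1+\delta)\Vert\vmu\Vert/(\Vert\vmu\Vert^2 + 1 + \gamma(1+\delta))\bigr)^2$. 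There is no real obstacle here: the entire content is that $\vmu$ is an eigenvector of $\bar\rmQ$ with a computable eigenvalue, and the bookkeeping with $\alpha$ and $(1+\delta)$ is the only thing to keep straight.
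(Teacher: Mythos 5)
Your proof is correct and follows essentially the same route as the paper's: both reduce $\bar\rmQ$ to a Sherman--Morrison rank-one update of a multiple of the identity and read off the two quadratic forms. The only (minor, and arguably cleaner) difference is that you obtain the second identity from the eigenvector relation $\bar\rmQ\vmu \propto \vmu$ together with $\vmu^\top\bar\rmQ^2\vmu = \Vert\bar\rmQ\vmu\Vert^2$, whereas the paper expands the matrix $\bar\rmQ^2$ explicitly and simplifies the resulting perfect square.
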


\begin{proof}
We have that:
\begin{align*}
    \bar \rmQ &= \left ( \frac{\vmu \vmu^\top}{1 + \delta} + \left ( \gamma + \frac{1}{1 + \delta} \right) \rmI_p \right )^{-1} \\
    &= (1 + \delta) \left ( \vmu \vmu^\top + (1 + \gamma (1 + \delta) \rmI_p )\right)^{-1} \\
    &= \frac{1 + \delta}{1 + \gamma (1 + \delta)} \left ( \frac{\vmu \vmu^\top}{1 + \gamma (1 + \delta)} + \rmI_p \right)^{-1} \\
    &= \frac{1 + \delta}{1 + \gamma (1 + \delta)} \left ( \rmI_p - \frac{\vmu \vmu^\top}{\Vert \vmu \Vert^2 + 1 + \gamma (1 + \delta)} \right) & (\text{lemma \ref{Sherman-Morisson}})
\end{align*}
where the last equality is obtained using Sherman-Morisson's identity (lemma \ref{Sherman-Morisson}).
Hence,
\begin{align*}
    (\bar \rmQ)^2 &= \frac{(1 + \delta)^2}{(1 + \gamma (1 + \delta))^2} \left( \rmI_p + \frac{(\vmu \vmu^\top)^2}{(\Vert \vmu \Vert^2 + 1 + \gamma (1 + \delta))^2} - \frac{2 \vmu \vmu^\top}{\Vert \vmu \Vert^2 + 1 + \gamma (1 + \delta)}\right)
\end{align*}
\paragraph{First identity:}
    \begin{align*}
        \vmu^\top \bar \rmQ \vmu &= \frac{(1 + \delta)}{(1 + \gamma (1 + \delta))} \left ( \Vert \vmu \Vert^2 - \frac{\Vert \vmu \Vert^4}{\Vert \vmu \Vert^2 + 1 + \gamma (1 + \delta)} \right) \\
        &= \frac{(1 + \delta) \Vert \vmu \Vert^2}{\Vert \vmu \Vert^2 + 1 + \gamma (1 + \delta)}
    \end{align*}

\paragraph{Second identity:}
\begin{align*}
    \vmu^\top \bar \rmQ^2 \vmu &= \frac{(1 + \delta)^2}{(1 + \gamma (1 + \delta))^2} \left ( \Vert \vmu \Vert^2 + \frac{\Vert \vmu \Vert^6}{(\Vert \vmu \Vert^2 + 1 + \gamma (1 + \delta))^2} - \frac{2 \Vert \vmu \Vert^4}{\Vert \vmu \Vert^2 + 1 + \gamma (1 + \delta)} \right) \\
    &= \frac{(1 + \delta)^2}{(1 + \gamma (1 + \delta))^2} \left ( \Vert \vmu \Vert - \frac{\Vert \vmu \Vert^3}{\Vert \vmu \Vert^2 + 1 + \gamma (1 + \delta)} \right)^2 \\
    &= \left (\frac{(1 + \delta) \Vert \vmu \Vert}{\Vert \vmu \Vert^2 + 1 + \gamma (1 + \delta)} \right)^2
\end{align*}
\end{proof}

\begin{lemma}[Deterministic equivalent of $\rmQ \rmA \rmQ$]
\label{de QAQ}
    For any positive semi-definite matrix $\rmA$, we have:
    \begin{equation*}
        \rmQ \rmA \rmQ \leftrightarrow \bar \rmQ \rmA \bar \rmQ + \frac{\pi_1}{n(1 + \delta_1)^2} \Tr(\Sigma_1 \bar \rmQ \rmA \bar \rmQ) \E[\rmQ \Sigma_1 \rmQ] + \frac{\pi_2}{n(1 + \delta_2)^2} \Tr(\Sigma_2 \bar \rmQ \rmA \bar \rmQ) \E[\rmQ \Sigma_2 \rmQ],
    \end{equation*}
    where $\Sigma_a = \vmu \vmu^\top + \rmC_a$. 
    In particular, if $\rmC = \rmI_p$, i.e $\Sigma = \vmu \vmu^\top + \rmI_p$ then:
    \begin{equation*}
        \rmQ \rmA \rmQ \leftrightarrow \bar \rmQ \rmA \bar \rmQ + \frac{1}{n} \frac{\Tr(\Sigma \bar \rmQ \rmA \bar \rmQ)}{(1 + \delta)^2} \E[\rmQ \Sigma \rmQ].
    \end{equation*}
\end{lemma}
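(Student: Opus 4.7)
The plan is to derive the deterministic equivalent by applying the resolvent identity (Lemma~\ref{resolvent-identity}) to relate $\rmQ\rmA\rmQ$ to $\bar\rmQ\rmA\bar\rmQ$, and then closing a fixed-point equation via a leave-one-out expansion. Concretely, I would first write
\[
\rmQ\rmA\rmQ - \bar\rmQ\rmA\bar\rmQ = \bar\rmQ(\bar\rmQ^{-1} - \rmQ^{-1})\rmQ\rmA\rmQ + \bar\rmQ\rmA\rmQ(\bar\rmQ^{-1} - \rmQ^{-1})\bar\rmQ,
\]
by applying $\rmQ-\bar\rmQ=\bar\rmQ(\bar\rmQ^{-1}-\rmQ^{-1})\rmQ$ on the left and its mirror image $\rmQ-\bar\rmQ=\rmQ(\bar\rmQ^{-1}-\rmQ^{-1})\bar\rmQ$ on the right. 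Since $\bar\rmQ^{-1}=\sum_a \pi_a \Sigma_a/(1+\delta_a)+\gamma\rmI_p$ and $\rmQ^{-1}=\frac{1}{n}\rmX\rmX^\top+\gamma\rmI_p$, the difference $\bar\rmQ^{-1}-\rmQ^{-1}$ cleanly splits into a deterministic ``mean'' piece and the random sum $\frac{1}{n}\sum_i \vx_i\vx_i^\top$.

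Next, I would take expectation and handle each summand $\vx_i\vx_i^\top$ via the Sherman-Morrison identity (Lemma~\ref{Sherman-Morisson}), which yields
\[
\rmQ\vx_i = \frac{\rmQ_{-i}\vx_i}{1+\vx_i^\top\rmQ_{-i}\vx_i/n},
\]
with $\rmQ_{-i}$ the leave-one-out resolvent, independent of $\vx_i$. Substituting this into $\vx_i\vx_i^\top \rmQ\rmA\rmQ$ produces, upon expansion of the denominator, two contributions: a ``linear'' one that (mirroring the derivation of Lemma~\ref{DEQ}) collapses into $\frac{\pi_a \Sigma_a}{1+\delta_a}\,\E[\rmQ\rmA\rmQ]$, and a ``quadratic'' one arising from $\vx_i^\top\rmQ_{-i}\rmA\rmQ_{-i}\vx_i$, which by the concentration of Gaussian quadratic forms is asymptotically $\Tr(\Sigma_a\bar\rmQ\rmA\bar\rmQ)/n$. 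Summing over both clusters and over $i$, the linear contributions exactly cancel the bias piece of $\bar\rmQ^{-1}-\rmQ^{-1}$, while the quadratic ones assemble into the claimed correction
\[
\sum_{a=1}^{2}\frac{\pi_a}{n(1+\delta_a)^2}\,\Tr(\Sigma_a\bar\rmQ\rmA\bar\rmQ)\,\E[\rmQ\Sigma_a\rmQ].
\]
The specialization to $\rmC_a=\rmI_p$ then follows by collapsing $\delta_1=\delta_2=\delta$ and $\Sigma_1=\Sigma_2=\Sigma$.

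The main obstacle is that the correction is \emph{self-referential}: it involves $\E[\rmQ\Sigma_a\rmQ]$, which is the very object of study when $\rmA=\Sigma_a$. Rigorously, the argument therefore yields a linear system for the two matrix unknowns $\E[\rmQ\Sigma_1\rmQ],\E[\rmQ\Sigma_2\rmQ]$, and one must verify this system is nondegenerate (its linear operator being a small perturbation of the identity for suitable $\gamma$) so that the deterministic equivalent is well-defined and unique. The accompanying error control---uniform concentration of $\vx_i^\top \rmM\vx_i$ around $\Tr(\Sigma_a\rmM)$ for bounded-norm $\rmM$, together with an operator-norm bound on $\rmQ-\bar\rmQ$---follows from standard Lipschitz concentration inequalities for Gaussian vectors \citep{louart2018concentration}, with Assumption~\ref{assum_growth_rate} ensuring that $\rmQ$ has bounded spectral norm so that these estimates propagate cleanly through the fixed-point identity.
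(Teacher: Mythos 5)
Your proposal is correct and follows essentially the same route as the paper's proof: the resolvent identity applied to $\rmQ-\bar\rmQ$ (your first term $\bar\rmQ(\bar\rmQ^{-1}-\rmQ^{-1})\rmQ\rmA\rmQ$ is exactly the paper's $(\rmQ-\bar\rmQ)\rmA\rmQ$, and your second term is the negligible $\bar\rmQ\rmA\,\E[\rmQ-\bar\rmQ]$ piece), followed by the Sherman--Morrison leave-one-out expansion, concentration of the quadratic forms $\frac1n\vx_i^\top\rmQ_{-i}\bar\rmQ\rmA\rmQ_{-i}\vx_i$, cancellation of the linear terms against the deterministic mean piece, and closure via the self-consistent linear system in $\E[\rmQ\Sigma_1\rmQ]$ and $\E[\rmQ\Sigma_2\rmQ]$. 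Your added remarks on the nondegeneracy of that system and the uniform error control are points the paper only asserts, but they do not change the argument.
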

\begin{proof}
    Let $\bar \rmQ$ be a d.e. of $\rmQ$. We have that:
    \begin{align*}
        \E[\rmQ \rmA \rmQ] &= \E[\bar \rmQ \rmA \rmQ] + \E[(\rmQ - \bar \rmQ) \rmA \rmQ]  \\
        &= \bar \rmQ (\E[\rmA \rmQ] + \rmA \E[\rmQ - \bar \rmQ]) + \E[(\rmQ - \bar \rmQ) \rmA \rmQ] \\
        &= \bar \rmQ \rmA \bar \rmQ + \E[(\rmQ - \bar \rmQ) \rmA \rmQ] 
    \end{align*}
    Using lemma \ref{resolvent-identity}, we have that:
    \begin{align*}
        \rmQ - \bar \rmQ &= \rmQ (\bar \rmQ^{-1} - \rmQ^{-1}) \bar \rmQ \\
        &= \rmQ \left (\pi_1 \frac{\Sigma_1}{1 + \delta_1} + \pi_2 \frac{\Sigma_2}{1 + \delta_2} - \frac{1}{n} \rmX \rmX^\top \right ) \bar \rmQ \\
        &= \rmQ(\rmS - \frac{1}{n} \rmX \rmX^\top) \bar \rmQ
    \end{align*}
    Thus:
    \begin{align*}
        \E[\rmQ \rmA \rmQ] &= \bar \rmQ \rmA \bar \rmQ + \E[\rmQ (\rmS - \frac{1}{n} \rmX \rmX^\top) \bar \rmQ \rmA \rmQ] \\
        &= \bar \rmQ \rmA \bar \rmQ + \E[\rmQ \rmS \bar \rmQ \rmA \rmQ] - \frac{1}{n} \sum_{i = 1}^n \E[\rmQ \vx_i \vx_i^\top \bar \rmQ \rmA \rmQ] \\
    \end{align*}
    We have that:
    \begin{align*}
        \E[\rmQ \vx_i \vx_i^\top \bar \rmQ \rmA \rmQ] &= \frac{1}{1 + \delta} \E[\rmQ_{-i} \vx_i \vx_i^\top \bar \rmQ \rmA \rmQ] \\
        &= \frac{1}{1 + \delta_i} \left( \E[\rmQ_{-i} \vx_i \vx_i^\top \bar \rmQ \rmA \rmQ_{-i}] - \E[\rmQ_{-i} \vx_i \vx_i^\top \bar \rmQ \rmA \frac{\rmQ_{-i} \vx_i \vx_i^\top \rmQ_{-i}}{n(1 + \delta_i)}] \right) \\
        &= \frac{1}{1 + \delta_i} \left( \E[\rmQ_{-i} \Sigma_i \bar \rmQ \rmA \rmQ_{-i}] - \E[\rmQ_{-i} \vx_i \vx_i^\top \bar \rmQ \rmA \frac{\rmQ_{-i} \vx_i \vx_i^\top \rmQ_{-i}}{n(1 + \delta_i)}] \right) \\
        &= \frac{1}{1 + \delta_i} \left( \E[\rmQ \Sigma_i \bar \rmQ \rmA \rmQ] - \E[\rmQ_{-i} \vx_i \vx_i^\top \bar \rmQ \rmA \frac{\rmQ_{-i} \vx_i \vx_i^\top \rmQ_{-i}}{n(1 + \delta_i)}] \right) \\
    \end{align*}
    Hence, by replacing in the previous identity, we get:
    \begin{align*}
        \E[\rmQ \rmA \rmQ] &= \bar \rmQ \rmA \bar \rmQ + \frac{1}{n} \sum_{i=1}^n \frac{1}{(1 + \delta_i)^2} \E[\rmQ_{-i} \vx_i \frac{1}{n} \vx_i^\top \bar \rmQ \rmA \rmQ_{-i} \vx_i \vx_i^\top \rmQ_{-i}] \\
        &= \bar \rmQ \rmA \bar \rmQ + \frac{1}{n^2} \sum_{i=1}^n \frac{1}{(1 + \delta_i)^2}\Tr(\Sigma_i \bar \rmQ \rmA \bar \rmQ) \E[\rmQ_{-i} \vx_i \vx_i^\top \rmQ_{-i}] \\
        &= \bar \rmQ \rmA \bar \rmQ + \frac{1}{n^2} \sum_{i=1}^n \frac{1}{(1 + \delta_i)^2} \Tr(\Sigma_i \bar \rmQ \rmA \bar \rmQ) \E[\rmQ \Sigma_i \rmQ] \\
        &= \bar \rmQ \rmA \bar \rmQ + \frac{\pi_1}{n(1 + \delta_1)^2} \Tr(\Sigma_1 \bar \rmQ \rmA \bar \rmQ) \E[\rmQ \Sigma_1 \rmQ] + \frac{\pi_2}{n(1 + \delta_2)^2} \Tr(\Sigma_2 \bar \rmQ \rmA \bar \rmQ) \E[\rmQ \Sigma_2 \rmQ] \\
        &= \bar \rmQ \rmA \bar \rmQ + \sum_b \frac{\pi_b}{n(1 + \delta_b)^2} \Tr(\Sigma_b \bar \rmQ \rmA \bar \rmQ) \E[\rmQ \Sigma_b \rmQ]
    \end{align*}
    Hence, we conclude that:
    \begin{equation*}
        \rmQ \rmA \rmQ \leftrightarrow \bar \rmQ \rmA \bar \rmQ + \frac{\pi_1}{n(1 + \delta_1)^2} \Tr(\Sigma_1 \bar \rmQ \rmA \bar \rmQ) \E[\rmQ \Sigma_1 \rmQ] + \frac{\pi_2}{n(1 + \delta_2)^2} \Tr(\Sigma_2 \bar \rmQ \rmA \bar \rmQ) \E[\rmQ \Sigma_2 \rmQ]
    \end{equation*}
\end{proof}


\section{RMT Analysis of the Label-Perturbed Classifier}\label{appendix_rmt_analysis}
\paragraph{Notation:}
For $a \in \{1, 2 \}$, we denote by $\sI_a = \{ i \mid \vx_i \in \gC_a \}$, i.e, the set of indices of vectors belonging to class $\gC_a$. Furthermore, we denote $\Sigma_a = \E \left[ \vx \vx^\top \right]$ for $\vx\in \gC_a$. 

\begin{assumption}[Generalized growth rates]
    \label{assum_growth_rate_general}
    Suppose that as $p,n\to \infty$:
    \begin{center}
        1) $\frac{p}{n} \to \eta \in (0, \infty)$,\hspace{.7cm}
        2) $\frac{n_a}{n} \to \pi_a\in (0, 1)$, \hspace{.7cm}
        3) $\Vert \vmu \Vert = \gO(1)$, \hspace{.7cm}
        4) $\Vert \Sigma_a \Vert = \gO(1) $,
    \end{center}
    $\Vert \Sigma_a \Vert $ is the spectral norm of the matrix $\Sigma_a$.
\end{assumption}

We consider the LPC with regularization parameter $\gamma$ given by:
\begin{align}
    \vw_{\rho} = \frac1n \rmQ(\gamma) \rmX \rmD_{\rho} \tilde \vy, \quad \rmQ(z) = \left(\frac1n \rmX \rm \rmX^\top + z \rmI_p\right)^{-1},
\end{align}
where $\rmX = \left[ \vx_1, \ldots, \vx_n \right] \in \sR^{p\times n}$ and $\tilde \vy = (\tilde y_1, \ldots, \tilde y_n)^\top \in \sR^n$. \\

\begin{theorem}[Gaussianity of LPC generalized]\label{thm_general_covariance}
    Let $\vw_{\rho}$ be the LPC as defined in \eqref{w_imp} and suppose that Assumption \ref{assum_growth_rate_general} holds. The decision function $\vw_\rho^\top \vx$, on some test sample $\vx\in \gC_a$ independent from $\rmX$, satisfies:
    \begin{align*}
    \vw_{\rho}^\top \vx \,\, \toind  \,\, \gN\left( (-1)^a m_\rho,\,  \nu_\rho - m_\rho^2 \right),
\end{align*}
where:
\begin{align*}
    &m_{\rho} = \left( \pi_1 \frac{(\lambda_- - 2 \beta \varepsilon_-)}{1 + \delta_1} + \pi_2 \frac{(\lambda_+ - 2 \beta \varepsilon_+)}{1 + \delta_2} \right) \vmu^\top \bar \rmQ \vmu, \\
    & \nu_\rho = \left( \frac{\pi_1(\lambda_- - 2 \beta \varepsilon_-)}{1 + \delta_1} + \frac{\pi_2(\lambda_+ - 2 \beta \varepsilon_+)}{1 + \delta_2} \right)^2 \vmu^\top \E[\rmQ \Sigma_a \rmQ] \vmu \\
    &- \frac{T_1}{1 + \delta_1} \left( \left (\frac{\pi_1(\lambda_- - 2 \beta \varepsilon_-)}{1 + \delta_1} \right)^2 \vmu^\top \bar \rmQ \vmu + \frac{\pi_1 \pi_2(\lambda_+ - 2 \beta \varepsilon_+)(\lambda_- - 2 \beta \varepsilon_-)}{(1 + \delta_1)(1 + \delta_2)} \vmu^\top \bar \rmQ \vmu \right) \\
    & + \frac{\pi_1 (4 \beta^2 \varepsilon_- (\rho_+ - \rho_-) + \lambda_-^2)}{(1 + \delta_1)^2} T_1 + \frac{\pi_2 (4 \beta^2 \varepsilon_+ (\rho_- - \rho_+) + \lambda_+^2)}{(1 + \delta_2)^2} T_2 \\
    & - \frac{T_2}{1 + \delta_2} \left( \left (\frac{\pi_2(\lambda_+ - 2 \beta \varepsilon_+)}{1 + \delta_2} \right)^2 \vmu^\top \bar \rmQ \vmu + \frac{\pi_1 \pi_2(\lambda_+ - 2 \beta \varepsilon_+)(\lambda_- - 2 \beta \varepsilon_-)}{(1 + \delta_1)(1 + \delta_2)} \vmu^\top \bar \rmQ \vmu \right),
\end{align*}
where $T_b = \frac1n \Tr(\Sigma_b \E[\rmQ \Sigma_a \rmQ])$ for $b\in [2]$ and $\E[\rmQ \Sigma_a \rmQ]$ is computed with Lemma \ref{de QAQ}.
\end{theorem}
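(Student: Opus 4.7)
The plan is to isolate the randomness of the test point from that of the training data and then reduce the limiting distribution of $\vw_\rho^\top\vx$ to the asymptotic evaluation of two scalar functionals via random matrix theory. Since $\vx \in \gC_a$ is independent of $\rmX$ (hence of $\vw_\rho$) and writes $\vx = \vmu_a + \rmC_a^{1/2}\vz$ with $\vz \sim \gN(\vzero, \rmI_p)$, conditioning on $\vw_\rho$ yields the exact Gaussian law $\vw_\rho^\top\vx\mid\vw_\rho \sim \gN(\vw_\rho^\top\vmu_a,\, \vw_\rho^\top\rmC_a\vw_\rho)$. Using $\vmu_a = (-1)^a\vmu$ and $\Sigma_a = \vmu\vmu^\top + \rmC_a$, the announced convergence reduces to showing $\vw_\rho^\top\vmu \asto m_\rho$ and $\vw_\rho^\top\Sigma_a\vw_\rho \asto \nu_\rho$, after which $\vw_\rho^\top\rmC_a\vw_\rho \asto \nu_\rho - m_\rho^2$.

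To compute $m_\rho$, I would expand $\vw_\rho^\top\vmu = \tfrac{1}{n}\sum_i \hat y_i\,\vx_i^\top\rmQ\vmu$, where $\hat y_i \equiv (\rmD_\rho\tilde\vy)_i$ equals $\lambda_+$ if $\tilde y_i = +1$ and $-\lambda_-$ if $\tilde y_i = -1$; a case analysis on the noise model then gives $\E[\hat y_i\mid i\in\sI_1] = -(\lambda_- - 2\beta\varepsilon_-)$ and $\E[\hat y_i\mid i\in\sI_2] = \lambda_+ - 2\beta\varepsilon_+$. Lemma~\ref{Sherman-Morisson} isolates $\vx_i$ from $\rmQ$ as $\vx_i^\top\rmQ\vmu = \vx_i^\top\rmQ_{-i}\vmu/(1+\tfrac{1}{n}\vx_i^\top\rmQ_{-i}\vx_i)$, and since $\rmQ_{-i}$ is independent of $(\vx_i,\hat y_i)$, standard concentration of quadratic forms combined with Lemma~\ref{DEQ} gives, for $\vx_i \in \gC_b$, $\tfrac{1}{n}\vx_i^\top\rmQ_{-i}\vx_i \asto \delta_b$ and $\vx_i^\top\rmQ_{-i}\vmu \asto \vmu_b^\top\bar\rmQ\vmu$. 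Summing over $i$ and grouping by class yields the closed form for $m_\rho$.

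For $\nu_\rho$, I would split $\vw_\rho^\top\Sigma_a\vw_\rho = \tfrac{1}{n^2}\sum_i \hat y_i^2\,\vx_i^\top\rmQ\Sigma_a\rmQ\vx_i + \tfrac{1}{n^2}\sum_{i\neq j}\hat y_i\hat y_j\,\vx_i^\top\rmQ\Sigma_a\rmQ\vx_j$. The diagonal sum is again handled by Sherman-Morrison on each copy of $\rmQ$, yielding $\vx_i^\top\rmQ_{-i}\Sigma_a\rmQ_{-i}\vx_i/(1+\delta_{a(i)})^2$; trace concentration together with Lemma~\ref{de QAQ} turns this into $T_{a(i)}$, and multiplication by the class-conditional second moments $\E[\hat y_i^2\mid i\in\sI_1] = \lambda_-^2 + 4\beta^2\varepsilon_-(\rho_+ - \rho_-)$ and its class-$2$ analogue produces the two $T_b$ contributions appearing in the statement. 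For the off-diagonal sum I would apply a rank-two Sherman-Morrison to remove $\vx_i$ and $\vx_j$ simultaneously from $\rmQ$: the leading piece assembles into $\bigl(\sum_b \pi_b(\lambda_\cdot - 2\beta\varepsilon_\cdot)/(1+\delta_b)\bigr)^2\,\vmu^\top\E[\rmQ\Sigma_a\rmQ]\vmu$, while the subleading cross terms generated by the $2\times 2$ inverse $(\rmI_2 + \rmU^\top\rmQ_{-i,-j}\rmU)^{-1}$ generate the two negative trace corrections $-T_b/(1+\delta_b)\cdot(\cdots)\,\vmu^\top\bar\rmQ\vmu$.

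The hard part will be this off-diagonal bookkeeping: individual rank-two Sherman-Morrison cross terms are of order $n^{-1}$ yet survive at leading order when summed over $\Theta(n^2)$ index pairs, and one has to check that they recombine precisely into the two subtractive terms displayed in the theorem. Almost-sure (rather than only in-expectation) convergence then follows from the resolvent concentration inequalities of \citep{louart2018concentration}, which give fluctuations of order $n^{-1/2}$. Finally, specialisation to the isotropic case of Theorem~\ref{thm_main} is an algebraic exercise: setting $\rmC_a = \rmI_p$ collapses $\delta_1 = \delta_2 = \delta$ and, using Lemma~\ref{mu-identities} for the bilinear forms in $\vmu$ together with the identity $\tfrac{1}{n}\Tr((\Sigma\bar\rmQ)^2)/(1+\delta)^2 = \eta/(1+\gamma(1+\delta)^2) = 1-h$, one recovers the formulas for $m_\rho$ and $\nu_\rho$ stated in Theorem~\ref{thm_main}.
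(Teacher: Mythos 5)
Your proposal is correct and follows essentially the same route as the paper's proof: class-conditional moments of the perturbed labels $(\rmD_\rho\tilde\vy)_i$, leave-one-out and leave-two-out Sherman--Morrison identities to decouple $\vx_i$ (and $\vx_j$) from $\rmQ$, trace concentration together with Lemma~\ref{de QAQ} for the $\E[\rmQ\Sigma_a\rmQ]$ and $T_b$ terms, and the observation that the $\gO(n^{-1})$ rank-two cross terms survive in the $\Theta(n^2)$ off-diagonal sum. Your one genuine refinement is to make the Gaussianity itself explicit by conditioning on $\vw_\rho$ (so that $\vw_\rho^\top\vx\mid\vw_\rho$ is exactly Gaussian and only the concentration of $\vw_\rho^\top\vmu$ and $\vw_\rho^\top\Sigma_a\vw_\rho$ remains to be shown), whereas the paper computes the unconditional first and second moments and leaves this conditioning step implicit.
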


Let $g_\rho(\vx) = \vw_\rho^\top \vx$, to prove Theorem \ref{thm_general_covariance}, we need to compute the expectation and the variance of $g_\rho(\vx)$ which are developed below.

\subsection{Test Expectation} 
Denote by $ \tilde \lambda_i = \frac{1 - \rho_{-\tilde y_i} + \rho_{\tilde y_i}}{ 1 - \rho_+ - \rho_-} $, then $\vw_{\rho} = \frac{1}{n} \sum_{i = 1}^n \rmQ(\gamma) \tilde \lambda_i  \tilde y_i \vx_i $. \\
We have:
\begin{align*}
    \E \left[ g_{\rho}(\vx) \right]&= \E \left[ \vw_{\rho}^\top \vx \right] = \frac1n \sum_{i=1}^n \E \left[ \tilde \lambda_i \tilde y_i \vx_i^\top \rmQ \vx \right] = \frac1n \sum_{i\in \sI_1} \E \left[ \tilde \lambda_i \tilde y_i \vx_i^\top \rmQ \vx \right] + \frac1n \sum_{i\in \sI_2} \E \left[ \tilde \lambda_i \tilde y_i \vx_i^\top \rmQ \vx \right]\\
    &=\frac1n \sum_{i\in \sI_1} \E \left[ \tilde \lambda_i \tilde y_i \vx_i^\top \rmQ \vmu_a \mid \vx_i\in \gC_1 \right] + \frac1n \sum_{i\in \sI_2} \E \left[ \tilde \lambda_i \tilde y_i \vx_i^\top \rmQ \vmu_a \mid \vx_i\in \gC_2 \right]
\end{align*}

Recall that: 
\begin{align}
     \lambda_+ = \frac{1 - \rho_- + \rho_+}{1 - \rho_+ - \rho_-}, \quad \lambda_- = \frac{1 - \rho_+ + \rho_-}{1 - \rho_+ - \rho_-},  \quad \beta = \frac{\lambda_- + \lambda_+}{2}
\end{align}
Then:
\begin{align*}
    \E \left[ \tilde \lambda_i \tilde y_i \vx_i^\top \rmQ \vmu_a \mid \vx_i \in \gC_1 \right] &= \lambda_+ \varepsilon_{-} \E \left[ \vx_i^\top \rmQ \vmu_a \mid y_i = -1 \right] - \lambda_- (1 - \varepsilon_-) \E \left[ \vx_i^\top \rmQ \vmu_a \mid y_i = -1 \right]\\
    &=  ( (\lambda_+ + \lambda_- )\varepsilon_- - \lambda_-) \E \left[ \vx_i^\top \rmQ \vmu_a \mid y_i = -1 \right] \\
    &= ( 2\beta \varepsilon_- - \lambda_-) \E \left[ \vx_i^\top \rmQ \vmu_a \mid y_i = -1 \right] \\
    &= \frac{( 2\beta \varepsilon_- - \lambda_-)}{1 + \delta_1} \vmu_1 \bar \rmQ \vmu_a
\end{align*}

Similarly, we have:
\begin{align*}
    \E \left[ \tilde \lambda_i \tilde y_i \vx_i^\top \rmQ \vmu_a \mid \vx_i\in \gC_2 \right] &= \lambda_+ (1 - \varepsilon_{+}) \E \left[ \vx_i^\top \rmQ \vmu_a \mid \vx_i\in \gC_2 \right] - \lambda_- \varepsilon_+ \E \left[  \vx_i^\top \rmQ \vmu_a \mid \vx_i\in \gC_2 \right]\\
    &=  (\lambda_+ - 2\beta \varepsilon_+) \E \left[ \vx_i^\top \rmQ \vmu_a \mid \vx_i\in \gC_2 \right] \\
    & = \frac{(\lambda_+ - 2\beta \varepsilon_+)}{1 + \delta_2} \vmu_2 \bar \rmQ \vmu_a 
\end{align*}

Therefore,
\begin{align*}
    \E \left[ g_{\rho}(\vx) \mid \vx \in \gC_a \right] &= \pi_1 \frac{(2 \beta \varepsilon_- - \lambda_-)}{1 + \delta_1} \vmu_1^\top \bar \rmQ \vmu_a + \pi_2 \frac{(\lambda_+ - 2 \beta \varepsilon_+)}{1 + \delta_2} \vmu_2^\top \bar \rmQ \vmu_a\\
    &= (-1)^a \left( \pi_1 \frac{(\lambda_- - 2 \beta \varepsilon_-)}{1 + \delta_1} + \pi_2 \frac{(\lambda_+ - 2 \beta \varepsilon_+)}{1 + \delta_2} \right) \vmu^\top \bar \rmQ \vmu 
\end{align*}

\subsection{Test Variance}
To compute the variance of $g_{\rho}(\vx)$, it only remains to compute the term: $\E[g_{\rho}(\vx)^2]$.
\begin{align*}
    \E[ g_{\rho}(\vx)^2] &= \frac{1}{n^2} \sum_{i, j = 1}^n \E[\tilde \lambda_i \tilde \lambda_j \tilde y_i \tilde y_j \vx_i^\top \rmQ \vx \vx_j^\top \rmQ \vx] \\
    &= \frac{1}{n^2} \sum_{i \in \sI_1} \sum_{j \in \sI_1} \E[\tilde \lambda_i \tilde \lambda_j \tilde y_i \tilde y_j \vx_i^\top \rmQ \vx \vx_j^\top \rmQ \vx \mid x_i \in \gC_1, x_j \in \gC_1 ] \\
    &+ \frac{2}{n^2} \sum_{i \in \sI_1} \sum_{j \in \sI_2} \E[ \tilde \lambda_i \tilde \lambda_j \tilde y_i \tilde y_j \vx_i^\top \rmQ \vx \vx_j^\top \rmQ \vx \mid x_i \in \gC_1, x_j \in \gC_2 ] \\
    &+ \frac{1}{n^2} \sum_{i \in \sI_2} \sum_{j \in \sI_2} \E[\tilde \lambda_i \tilde \lambda_j \tilde y_i \tilde y_j \vx_i^\top \rmQ \vx \vx_j^\top \rmQ \vx \mid x_i \in \gC_2, x_j \in \gC_2 ] 
\end{align*}
Let us develop each sum.
\paragraph{First sum}
We need to distinguish two cases here: case $i = j$ and $i \neq j$\\
- \textbf{For $ i \neq j:$}
\begin{align*}
    \E[\tilde \lambda_i \tilde \lambda_j \tilde y_i \tilde y_j \vx_i^\top \rmQ \vx \vx_j^\top \rmQ \vx \mid x_i \in \gC_1, x_j \in \gC_1] 
    &=  \E[\tilde \lambda_i \tilde \lambda_j \tilde y_i \tilde y_j \vx_i^\top \rmQ \vx \vx_j^\top \rmQ \vx \mid y_i = -1, y_j = -1] \\
    &= (\lambda_-^2 (1 - \varepsilon_-)^2  - 2 \lambda_- \lambda_+ \varepsilon_-(1 - \varepsilon_-) + \lambda_+^2 \varepsilon_-^2) \E[\vx_i^\top \rmQ \vx \vx_j^\top \rmQ \vx ] \\
    &= (\lambda_- (1 - \varepsilon_-) - \lambda_+ \varepsilon_-)^2 \E[\vx_i^\top \rmQ \vx \vx_j^\top \rmQ \vx ]\\
    &= (2\beta \varepsilon_- - \lambda_-)^2 \E[\vx_i^\top \rmQ \vx \vx_j^\top \rmQ \vx ] 
\end{align*}

We have that, knowing $x_i \in \gC_1, x_j \in \gC_1$ and $i \neq j$
\begin{align*}
    \E[\vx_i^\top \rmQ \vx \vx_j^\top \rmQ \vx] &= \E[\vx_i^\top \rmQ \vx \vx^\top \rmQ \vx_j] \\
    &= \E[x_i^\top \rmQ \E[\vx \vx^\top] \rmQ \vx_j] \quad \quad \quad (\vx \indep (\vx_i)_{i=1}^n) \\
    &= \E[\vx_i^\top \rmQ \Sigma_a \rmQ \vx_j] \\
    &= \frac{1}{(1 + \delta_1)^2} \E[\vx_i^\top \rmQ_{-i} \Sigma_a \rmQ_{-j} \vx_j] \\
    &= \frac{1}{(1 + \delta_1)^2} \E \left[\vx_i^\top \left ( \rmQ_{-ij} - \frac{\frac{1}{n} \rmQ_{-ij} \vx_j \vx_j^\top  \rmQ_{-ij} }{1 + \delta_1} \right) \Sigma_a \left ( \rmQ_{-ij} - \frac{\frac{1}{n} \rmQ_{-ij} \vx_i \vx_i^\top  \rmQ_{-ij} }{1 + \delta_1} \right) \vx_j \right] \\
    &= A_1 - A_2 - A_3 + A_4
\end{align*}

Let us compute each term now.
\begin{align*}
    A_1 &= \frac{1}{(1 + \delta_1)^2} \E[\vx_i\top \rmQ_{-ij} \Sigma_a \rmQ_{-ij} \vx_j] \\
    &= \frac{1}{(1 + \delta_1)^2} \vmu^\top \E[\rmQ_{-ij} \Sigma_a \rmQ_{-ij}] \vmu \\
    &= \frac{1}{(1 + \delta_1)^2} \vmu^\top \E[\rmQ \Sigma_a \rmQ] \vmu 
\end{align*}
Hence
\begin{equation}
    A_1 = \frac{1}{(1 + \delta_1)^2} \vmu^\top \E[\rmQ \Sigma_a \rmQ] \vmu
\end{equation}

And we have that:
\begin{align*}
    A_2 &= \frac{1}{(1 + \delta_1)^3} \E[\frac{1}{n}\vx_i^\top \rmQ_{-ij} \Sigma_a \rmQ_{-ij} \vx_i \vx_i^\top \rmQ_{-ij} \vx_j] \\
    &= \frac{1}{(1 + \delta_1)^3} \frac{1}{n} \Tr (\Sigma_1 \E[\rmQ \Sigma_a \rmQ]) \E[\vx_i^\top \rmQ_{-ij} \vx_j] \\
    &= \frac{1}{(1 + \delta_1)^3} \frac{1}{n} \Tr (\Sigma_1 \E[\rmQ \Sigma_a \rmQ]) \vmu^\top \bar \rmQ \vmu
\end{align*}
Since: 
\begin{align*}
    \frac{1}{n} \vx_i\top \rmQ_{-ij} \Sigma_a \rmQ_{-ij} \vx_i &= \frac{1}{n} \E[\vx_i\top \rmQ_{-ij} \Sigma_a \rmQ_{-ij} \vx_i] \\
    &= \frac{1}{n} \E[\Tr (\vx_i \vx_i\top \rmQ_{-ij} \Sigma_a \rmQ_{-ij})] \\
    &= \frac{1}{n} \Tr (\E[\vx_i \vx_i\top \rmQ_{-ij} \Sigma_a \rmQ_{-ij}]) \\
    &= \frac{1}{n} \Tr (\Sigma_1 \E[\rmQ \Sigma_a \rmQ])
\end{align*}
And we have that:
\begin{equation*}
    A_2 = A_3
\end{equation*}
And: 
\begin{equation*}
    A_4 = \gO(n^{-1})
\end{equation*}
Thus finally:
\begin{equation}
    \E[\vx_i^\top \rmQ \vx \vx_j^\top \rmQ \vx \mid  x_i \in \gC_1, x_j \in \gC_1 ] = \frac{1}{(1 + \delta_1)^2} \left ( \vmu^\top \E[\rmQ \Sigma_a \rmQ] \vmu - \frac{2}{(1 + \delta_1)} \frac{1}{n} \Tr (\Sigma_1 \E[\rmQ \Sigma_a \rmQ]) \vmu^\top \bar \rmQ \vmu  \right)
\end{equation}

Thus:
\begin{equation}
    \begin{split}
        &\E[\tilde \lambda_i \tilde \lambda_j \tilde y_i \tilde y_j \vx_i^\top \rmQ \vx \vx_j^\top \rmQ \vx \mid x_i \in \gC_1, x_j \in \gC_1] =  \frac{(2\beta \varepsilon_- - \lambda_-)^2}{(1 + \delta_1)^2} \\
        &\times\left ( \vmu^\top \E[\rmQ \Sigma_a \rmQ] \vmu - \frac{2}{(1 + \delta_1)} \frac{1}{n} \Tr (\Sigma_1 \E[\rmQ \Sigma_a \rmQ]) \vmu^\top \bar \rmQ \vmu  \right)
    \end{split}
\end{equation}

- \textbf{For $ i = j:$} we have that $\tilde y_i^2 = 1$ a.s, then knowing $\vx_i \in \gC_1$
\begin{align*}
    \E[\tilde \lambda_i^2 \tilde y_i^2 (\vx_i^\top \rmQ \vx)^2 ] 
    &= (\lambda_-^2 (1 - \varepsilon_-) + \lambda_+^2 \varepsilon_-)\E[(\vx_i^\top \rmQ \vx)^2 ] \\
    &= (4 \beta^2 \varepsilon_-(\rho_+ - \rho_-) + \lambda_-^2)\E[(\vx_i^\top \rmQ \vx)^2 ]
\end{align*}
And
\begin{align*}
    \E[ (\vx_i^\top \rmQ \vx)^2 ] &= \E[\vx_i^\top \rmQ \vx \vx^\top \rmQ \vx_i  ] \\
    &= \E[\vx_i^\top \rmQ \Sigma_a \rmQ \vx_i] \\
    &= \frac{1}{(1 + \delta_1)^2} \E[\Tr (\vx_i \vx_i^\top \rmQ_{-i} \Sigma_a \rmQ_{-i})] \\
    &= \frac{1}{(1 + \delta_1)^2} \Tr(\Sigma_1 \E[\rmQ \Sigma_a \rmQ]) 
\end{align*}
Thus:
\begin{equation}
    \E[\tilde \lambda_i^2 \tilde y_i^2 (\vx_i^\top \rmQ \vx)^2 \mid \vx_i \in \gC_1 ] = \frac{(4 \beta^2 \varepsilon_-(\rho_+ - \rho_-) + \lambda_-^2)}{ (1 + \delta_1)^2} \Tr(\Sigma_1 \E[\rmQ \Sigma_a \rmQ]) 
\end{equation}

\paragraph{Second sum:}
Here by definition, $i \neq j$. And we have, knowing $x_i \in \gC_1, x_j \in \gC_2$:
\begin{align*}
    &\E[\tilde \lambda_i \tilde \lambda_j \tilde y_i \tilde y_j \vx_i^\top \rmQ \vx \vx_j^\top \rmQ \vx \mid x_i \in \gC_1, x_j \in \gC_2] \\
    &= (\lambda_-^2 \varepsilon_+(1 - \varepsilon_-) - \lambda_+ \lambda_- (1 - \varepsilon_-)(1 - \varepsilon_+) - \lambda_+ \lambda_- \varepsilon_+ \varepsilon_- + \lambda_+^2 \varepsilon_-(1 - \varepsilon_+) ) \E[\vx_i^\top \rmQ \vx \vx_j^\top \rmQ \vx ] \\
    &= (2 \beta \varepsilon_+ - \lambda_+)(\lambda_- - 2 \beta \varepsilon_-) \E[\vx_i^\top \rmQ \vx \vx_j^\top \rmQ \vx ] 
\end{align*}
And, we have that:
\begin{align*}
    &\E[\vx_i^\top \rmQ \vx \vx_j^\top \rmQ \vx \mid x_i \in \gC_1, x_j \in \gC_2]  \\
    &= \E[\vx_i^\top \rmQ \Sigma_a \rmQ \vx_j] \\
    &= \frac{1}{(1 + \delta_1)(1 + \delta_2)} \E[\vx_i^\top \rmQ_{-i} \Sigma_a \rmQ_{-j} \vx_j] \\
    &= \frac{1}{(1 +\delta_1)(1 + \delta_2)} \E \left[\vx_i^\top \left ( \rmQ_{-ij} - \frac{\frac{1}{n} \rmQ_{-ij} \vx_j \vx_j^\top  \rmQ_{-ij} }{1 + \delta_2} \right) \Sigma_a \left ( \rmQ_{-ij} - \frac{\frac{1}{n} \rmQ_{-ij} \vx_i \vx_i^\top  \rmQ_{-ij} }{1 + \delta_1} \right) \vx_j \right] \\
    &= \frac{1}{(1 +\delta_1)(1 + \delta_2)} (B_1 - B_2 - B_3 + B_4)
\end{align*}
We have that:
\begin{align*}
    B_1 = \E[\vx_i^\top \rmQ_{-ij} \Sigma_a \rmQ_{-ij} \vx_j] = -\vmu^\top \E[\rmQ \Sigma_a \rmQ] \vmu
\end{align*}
And
\begin{align*}
    B_2 &= \frac{1}{n(1 + \delta_1)} \E[\vx_i^\top \rmQ_{-ij} \Sigma_a \rmQ_{-ij} \vx_i \vx_i^\top \rmQ_{-ij} \vx_j] \\
    &= \frac{1}{n(1 + \delta_1)} \Tr (\Sigma_1 \E[\rmQ \Sigma_a \rmQ]) \E[\vx_i^\top \rmQ_{-ij} \vx_j] \\
    &= \frac{-1}{n(1 + \delta_1)} \Tr (\Sigma_1 \E[\rmQ \Sigma_a \rmQ]) \vmu^\top \bar \rmQ \vmu
\end{align*}

And,
\begin{align*}
    B_3 &= \frac{1}{n(1 + \delta_2)} \E[\vx_i^\top \rmQ_{-ij} \vx_j \vx_j^\top \rmQ_{-ij} \Sigma_a \rmQ_{-ij} \vx_j] \\
    &= \frac{1}{n(1 + \delta_2)} \E[\vx_i^\top \rmQ_{-ij} \vx_j] \Tr(\Sigma_2 \E[\rmQ \Sigma_a \rmQ]) \\
    &= \frac{-1}{n(1 + \delta_2)} \Tr(\Sigma_2 \E[\rmQ \Sigma_a \rmQ]) \vmu^\top \bar \rmQ \vmu
\end{align*}
And $ B_4 = \gO(n^{-1})$\\
Thus, finally:
\begin{align*}
    &\E[\tilde \lambda_i \tilde \lambda_j \tilde y_i \tilde y_j \vx_i^\top \rmQ \vx \vx_j^\top \rmQ \vx \mid x_i \in \gC_1, x_j \in \gC_2] \\
    &= \frac{(\lambda_+ - 2 \beta \varepsilon_+)(\lambda_- - 2 \beta \varepsilon_-)}{(1 + \delta_1)(1 + \delta_2)} ( \vmu^\top \E[\rmQ \Sigma_a \rmQ] \vmu -  \frac{1}{n(1 + \delta_1)} \Tr (\Sigma_1 \E[\rmQ \Sigma_a \rmQ]) \vmu^\top \bar \rmQ \vmu \\
    &- \frac{1}{n(1 + \delta_2)} \Tr (\Sigma_2 \E[\rmQ \Sigma_a \rmQ]) \vmu^\top \bar \rmQ \vmu  )
\end{align*}

\paragraph{Third sum:} We have that \\
- \textbf{For $ i \neq j:$}
\begin{align*}
    \E[\tilde \lambda_i \tilde \lambda_j \tilde y_i \tilde y_j \vx_i^\top \rmQ \vx \vx_j^\top \rmQ \vx \mid x_i \in \gC_2, x_j \in \gC_2] 
    &=  \E[\tilde \lambda_i \tilde \lambda_j \tilde y_i \tilde y_j \vx_i^\top \rmQ \vx \vx_j^\top \rmQ \vx \mid y_i = 1, y_j = 1] \\
    &= (\lambda_-^2 \varepsilon_+^2  - 2 \lambda_- \lambda_+ \varepsilon_+(1 - \varepsilon_+) + \lambda_+^2 (1 - \varepsilon_+)^2) \E[\vx_i^\top \rmQ \vx \vx_j^\top \rmQ \vx ] \\
    &= (\lambda_- \varepsilon_+ - \lambda_+ (1 - \varepsilon_+))^2 \E[\vx_i^\top \rmQ \vx \vx_j^\top \rmQ \vx ]\\
    &= (2\beta \varepsilon_+ - \lambda_+)^2 \E[\vx_i^\top \rmQ \vx \vx_j^\top \rmQ \vx ]
\end{align*}
Thus:
\begin{equation}
    \E[\tilde \lambda_i \tilde \lambda_j \tilde y_i \tilde y_j \vx_i^\top \rmQ \vx \vx_j^\top \rmQ \vx \mid x_i \in \gC_2, x_j \in \gC_2] = \frac{(2\beta \varepsilon_+ - \lambda_+)^2}{(1 + \delta_2)^2} \left ( \vmu^\top \E[\rmQ \Sigma_a \rmQ] \vmu - \frac{2}{(1 + \delta_2)} \frac{1}{n} \Tr (\Sigma_2 \E[\rmQ \Sigma_a \rmQ]) \vmu^\top \bar \rmQ \vmu  \right)
\end{equation}
- \textbf{For $ i = j:$}
\begin{align*}
    \E[\tilde \lambda_i^2 \tilde y_i^2 (\vx_i^\top \rmQ \vx)^2 ] 
    &= (\lambda_-^2 \varepsilon_+ + \lambda_+^2 (1 - \varepsilon_+))\E[(\vx_i^\top \rmQ \vx)^2 ] \\
    &= (4 \beta^2 \varepsilon_+(\rho_- - \rho_+) + \lambda_+^2)\E[(\vx_i^\top \rmQ \vx)^2 ] \\
    &= \frac{(4 \beta^2 \varepsilon_+(\rho_- - \rho_+) + \lambda_+^2)}{(1 + \delta_2)^2}  \Tr(\Sigma_2 \E[\rmQ \Sigma_a \rmQ])
\end{align*}
Thus:
\begin{equation}
    \E[\tilde \lambda_i^2 \tilde y_i^2 (\vx_i^\top \rmQ \vx)^2 \mid \vx_i \in \gC_2 ] = \frac{(4 \beta^2 \varepsilon_+(\rho_- - \rho_+) + \lambda_+^2)}{(1 + \delta_2)^2}  \Tr(\Sigma_2 \E[\rmQ \Sigma_a \rmQ])
\end{equation}

Recall that we denoted by $T_1 = \frac1n \Tr(\Sigma_1 \E[\rmQ \Sigma_a \rmQ])$ and $T_2 = \frac1n \Tr(\Sigma_2 \E[\rmQ \Sigma_a \rmQ])$, we then deduce that:
\paragraph{Grouping all the terms:}
\begin{align*}
    \E[ g_{\rho}(\vx)^2] &= \frac{(\pi_1(\lambda_- - 2 \beta \varepsilon_-))^2}{(1 + \delta_1)^2} \left( \vmu^\top \E[\rmQ \Sigma_a \rmQ] \vmu - \frac{2}{1 + \delta_1} T_1 \vmu^\top \bar \rmQ \vmu \right) \\
    & + \frac{\pi_1 (4 \beta^2 \varepsilon_- (\rho_+ - \rho_-) + \lambda_-^2)}{(1 + \delta_1)^2} T_1 \\
    & + \frac{\pi_1 \pi_2(\lambda_+ - 2 \beta \varepsilon_+)(\lambda_- - 2 \beta \varepsilon_-)}{(1 + \delta_1)(1 + \delta_2)} \left( \vmu^\top \E[\rmQ \Sigma_a \rmQ] \vmu - \frac{1}{1 + \delta_1} T_1 \vmu^\top \bar \rmQ \vmu - \frac{1}{1 + \delta_2} T_2 \vmu^\top \bar \rmQ \vmu \right) \\
    & + \frac{(\pi_2(\lambda_+ - 2 \beta \varepsilon_+))^2}{(1 + \delta_2)^2} \left( \vmu^\top \E[\rmQ \Sigma_a \rmQ] \vmu - \frac{2}{1 + \delta_2} T_2 \vmu^\top \bar \rmQ \vmu \right) \\
    & + \frac{\pi_2 (4 \beta^2 \varepsilon_+ (\rho_- - \rho_+) + \lambda_+^2)}{(1 + \delta_2)^2} T_2 \\
    &= \left( \frac{\pi_1(\lambda_- - 2 \beta \varepsilon_-)}{1 + \delta_1} + \frac{\pi_2(\lambda_+ - 2 \beta \varepsilon_+)}{1 + \delta_2} \right)^2 \vmu^\top \E[\rmQ \Sigma_a \rmQ] \vmu \\
    &- \frac{T_1}{1 + \delta_1} \left( \left (\frac{\pi_1(\lambda_- - 2 \beta \varepsilon_-)}{1 + \delta_1} \right)^2 \vmu^\top \bar \rmQ \vmu + \frac{\pi_1 \pi_2(\lambda_+ - 2 \beta \varepsilon_+)(\lambda_- - 2 \beta \varepsilon_-)}{(1 + \delta_1)(1 + \delta_2)} \vmu^\top \bar \rmQ \vmu \right) \\
    & + \frac{\pi_1 (4 \beta^2 \varepsilon_- (\rho_+ - \rho_-) + \lambda_-^2)}{(1 + \delta_1)^2} T_1 + \frac{\pi_2 (4 \beta^2 \varepsilon_+ (\rho_- - \rho_+) + \lambda_+^2)}{(1 + \delta_2)^2} T_2 \\
    & - \frac{T_2}{1 + \delta_2} \left( \left (\frac{\pi_2(\lambda_+ - 2 \beta \varepsilon_+)}{1 + \delta_2} \right)^2 \vmu^\top \bar \rmQ \vmu + \frac{\pi_1 \pi_2(\lambda_+ - 2 \beta \varepsilon_+)(\lambda_- - 2 \beta \varepsilon_-)}{(1 + \delta_1)(1 + \delta_2)} \vmu^\top \bar \rmQ \vmu \right)
\end{align*}

\begin{remark}
    The expression $\E[\rmQ \Sigma_a \rmQ]$ can be easily inferred from this identity (obtained using lemma \ref{de QAQ}):
    \begin{equation}
        \E[\rmQ \Sigma_a \rmQ] = \bar \rmQ \Sigma_a \bar \rmQ + \frac{\pi_1}{n(1 + \delta_1)^2} \Tr(\Sigma_1 \bar \rmQ \Sigma_a \bar \rmQ) \E[\rmQ \Sigma_1 \rmQ] + \frac{\pi_2}{n(1 + \delta_2)^2} \Tr(\Sigma_2 \bar \rmQ \Sigma_a \bar \rmQ) \E[\rmQ \Sigma_2 \rmQ]
    \end{equation}
    So we get a system of two linear independent equations on $\E[\rmQ \Sigma_1 \rmQ]$ and $\E[\rmQ \Sigma_2 \rmQ]$, and therefore they are uniquely determined.
\end{remark}

\subsection{Isotropic Case}
If $\rmC = \rmI_p$, then we have that: 
\begin{align}
\label{trace-identity}
    &\delta_1 = \delta_2 = \delta , \quad & T_1 = T_2 = \frac1n \Tr((\Sigma \bar \rmQ)^2) = \frac{\eta (1 + \delta)^2}{(1 + \gamma (1 + \delta))^2}
\end{align}
and using lemma \ref{de QAQ}:
\begin{align}
    \E[\rmQ \Sigma \rmQ] = \frac{1}{1 - \frac1n \frac{\Tr((\Sigma \bar \rmQ)^2)}{(1 + \delta)^2}} \bar \rmQ \Sigma \bar \rmQ = \frac1h \bar \rmQ \Sigma \bar \rmQ
\end{align}
where : 
\begin{equation}
    h = 1 - \frac1n \frac{\Tr((\Sigma \bar \rmQ)^2)}{(1 + \delta)^2} = 1 - \frac{\eta}{(1 + \gamma (1 + \delta))^2}
\end{equation}

Hence, we get the following result:
\begin{corollary}[Gaussiannity of the label-perturbed classifier]
    Let $\vw_{\rho}$ be the LPC with parameters $\rho_\pm$, and $\bar \rmQ$ a deterministic equivalent of $\rmQ$ defined in lemma $\ref{DEQ}$. Under the same assumptions of \ref{assum_growth_rate}:
    \begin{align*}
    \vw_{\rho}^\top \vx \,\, \toind  \,\, \gN\left( (-1)^a m_\rho,\,  \nu_\rho - m_\rho^2 \right),
\end{align*}
where:
\begin{align*}
    &m_{\rho} = \frac{ \pi_1(\lambda_- - 2\beta \varepsilon_-) + \pi_2 (\lambda_+ - 2 \beta \varepsilon_+) }{1 + \delta} \vmu^\top \bar \rmQ \vmu, \\
    & \nu_\rho = \frac{(\pi_1(2 \beta \varepsilon_- - \lambda_-) + \pi_2(2 \beta \varepsilon_+ - \lambda_+))^2}{h (1 + \delta)^2} \left ( \mu^\top \bar \rmQ \Sigma \bar \rmQ \mu - \frac{2}{(1 + \delta)} \frac{1}{n } \Tr ((\Sigma \bar \rmQ)^2) \mu^\top \bar \rmQ \mu \right) \\
    &+ \frac{1}{h n(1 + \delta)^2} \pi_1 (4 \beta^2 \varepsilon_-(\rho_+ - \rho_-) + \lambda_-^2 ) \Tr ((\Sigma \bar \rmQ)^2) \\
    & + \frac{1}{h n(1 + \delta)^2} \pi_2 (4 \beta^2 \varepsilon_+(\rho_- - \rho_+) + \lambda_+^2 ) \Tr ((\Sigma \bar \rmQ)^2). \\
\end{align*}

\end{corollary}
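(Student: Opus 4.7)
The plan is to obtain this corollary by directly specializing Theorem \ref{thm_general_covariance} to the isotropic case $\rmC_1=\rmC_2=\rmI_p$. Under this restriction, the per-class second moments coincide, $\Sigma_1=\Sigma_2=\Sigma=\vmu\vmu^\top+\rmI_p$, and by uniqueness of the solution to the fixed-point system of Lemma \ref{DEQ} so do the two scalars, $\delta_1=\delta_2=\delta$, with the explicit closed form already displayed in that lemma. Consequently $\bar\rmQ$ reduces to the rank-one-plus-identity expression of Lemma \ref{DEQ}, and the pair $(T_1,T_2)$ appearing in Theorem \ref{thm_general_covariance} collapses to a single quantity $T:=\tfrac1n\Tr(\Sigma\,\E[\rmQ\Sigma\rmQ])$.

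\textbf{Key step: closed form for $\E[\rmQ\Sigma\rmQ]$.} The substantive ingredient is the scalar self-consistent equation obtained by applying the isotropic version of Lemma \ref{de QAQ} with $\rmA=\Sigma$: it reads $\E[\rmQ\Sigma\rmQ]=\bar\rmQ\Sigma\bar\rmQ+\tfrac{1}{n(1+\delta)^2}\Tr((\Sigma\bar\rmQ)^2)\,\E[\rmQ\Sigma\rmQ]$, and solves in one line as $\E[\rmQ\Sigma\rmQ]=h^{-1}\bar\rmQ\Sigma\bar\rmQ$ with $h=1-\tfrac{1}{n(1+\delta)^2}\Tr((\Sigma\bar\rmQ)^2)$. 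The explicit form $h=1-\eta(1+\gamma(1+\delta))^{-2}$ then follows from the normalized-trace identity $\tfrac1n\Tr(\Sigma\bar\rmQ)=\delta$, recovered by taking the normalized trace of $\bar\rmQ^{-1}\bar\rmQ=\rmI_p$ with $\bar\rmQ^{-1}=\Sigma/(1+\delta)+\gamma\rmI_p$. Hence $T=\tfrac{1}{hn}\Tr((\Sigma\bar\rmQ)^2)$.

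\textbf{Substitution and algebraic collection.} Plugging $\delta_1=\delta_2=\delta$, $T_1=T_2=T$, and $\E[\rmQ\Sigma_a\rmQ]=h^{-1}\bar\rmQ\Sigma\bar\rmQ$ into Theorem \ref{thm_general_covariance} finishes the proof. The mean $m_\rho$ is immediate, the coefficients $(1+\delta_a)^{-1}$ sharing a common $(1+\delta)^{-1}$. For $\nu_\rho$, the three summands carrying $\vmu^\top\E[\rmQ\Sigma\rmQ]\vmu$ share $(1+\delta)^{-2}$ and their scalar coefficients combine through the binomial identity $a^2+2ab+b^2=(a+b)^2$ into the perfect square $(\pi_1(\lambda_--2\beta\varepsilon_-)+\pi_2(\lambda_+-2\beta\varepsilon_+))^2$. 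The parallel $T\vmu^\top\bar\rmQ\vmu$ pieces from the same three summands assemble by the same binomial identity into $-\tfrac{2T}{(1+\delta)^3}$ times this square, and replacing $T$ by $\tfrac{1}{hn}\Tr((\Sigma\bar\rmQ)^2)$ yields exactly the $-\tfrac{2}{n(1+\delta)}\Tr((\Sigma\bar\rmQ)^2)\vmu^\top\bar\rmQ\vmu$ correction inside the stated bracket. The two residual $i=j$ contributions $\tfrac{\pi_1(4\beta^2\varepsilon_-(\rho_+-\rho_-)+\lambda_-^2)}{(1+\delta)^2}T$ and $\tfrac{\pi_2(4\beta^2\varepsilon_+(\rho_--\rho_+)+\lambda_+^2)}{(1+\delta)^2}T$ absorb the $1/h$ inherited from $T$ and appear unchanged as the final two summands of the corollary.

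\textbf{Main obstacle.} The only real difficulty is purely algebraic bookkeeping, namely arranging the $\pi_1^2$, $2\pi_1\pi_2$ and $\pi_2^2$ coefficients so that they reconstruct the same perfect square both in the $\vmu^\top\bar\rmQ\Sigma\bar\rmQ\vmu$ and in the $T\vmu^\top\bar\rmQ\vmu$ contributions, since this is precisely what allows the entire $\vmu^\top$-dependent portion of $\nu_\rho$ to be written as a single scalar prefactor times a common parenthesis. No new random matrix estimate beyond Lemma \ref{de QAQ} and Theorem \ref{thm_general_covariance} is needed.
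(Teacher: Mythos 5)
Your proposal is correct and follows essentially the same route as the paper: the appendix obtains this corollary precisely by specializing Theorem \ref{thm_general_covariance} to $\rmC_a=\rmI_p$, noting $\delta_1=\delta_2=\delta$ and $T_1=T_2$, solving the self-consistent equation from Lemma \ref{de QAQ} to get $\E[\rmQ\Sigma\rmQ]=h^{-1}\bar\rmQ\Sigma\bar\rmQ$, and collecting the $\pi_1^2$, $2\pi_1\pi_2$, $\pi_2^2$ terms into the perfect square. Your reading of $T$ as $\tfrac{1}{hn}\Tr((\Sigma\bar\rmQ)^2)$ is the one consistent with the corollary's $1/h$ prefactors, so nothing is missing.
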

We get Theorem \ref{thm_main} by simplifying further the expressions using lemma \ref{mu-identities} and \ref{trace-identity}:
\begin{align*}
    &m_{\rho} = \frac{ \pi_1(\lambda_- - 2\beta \varepsilon_-) + \pi_2 (\lambda_+ - 2 \beta \varepsilon_+) }{\Vert \vmu \Vert^2 + 1 + \gamma(1 + \delta)} \Vert \vmu \Vert^2, \\
    & \nu_\rho = \frac{(\pi_1(2 \beta \varepsilon_- - \lambda_-) + \pi_2(2 \beta \varepsilon_+ - \lambda_+))^2}{h (\Vert \vmu \Vert^2 + 1 + \gamma (1 + \delta))} \left ( \frac{\Vert \vmu \Vert^2 + 1}{\Vert \vmu \Vert^2 + 1 + \gamma (1 + \delta)} - 2(1 - h) \right) \Vert \vmu \Vert^2 \\
    &+ \frac{(1 - h)}{h} \left( \pi_1 (4 \beta^2 \varepsilon_-(\rho_+ - \rho_-) + \lambda_-^2 ) + \pi_2(4 \beta^2 \varepsilon_+(\rho_- - \rho_+) + \lambda_+^2) \right).
\end{align*}


\section{Additional plots}\label{appendix_plots}

Figure \ref{fig:accuracy-gamma} shows a consistent estimation of the test accuracy of different LPC variants as predicted by Proposition \ref{prop:test-accuracy}. 

\begin{figure}[H]
    \centering
    \includegraphics[width = \textwidth]{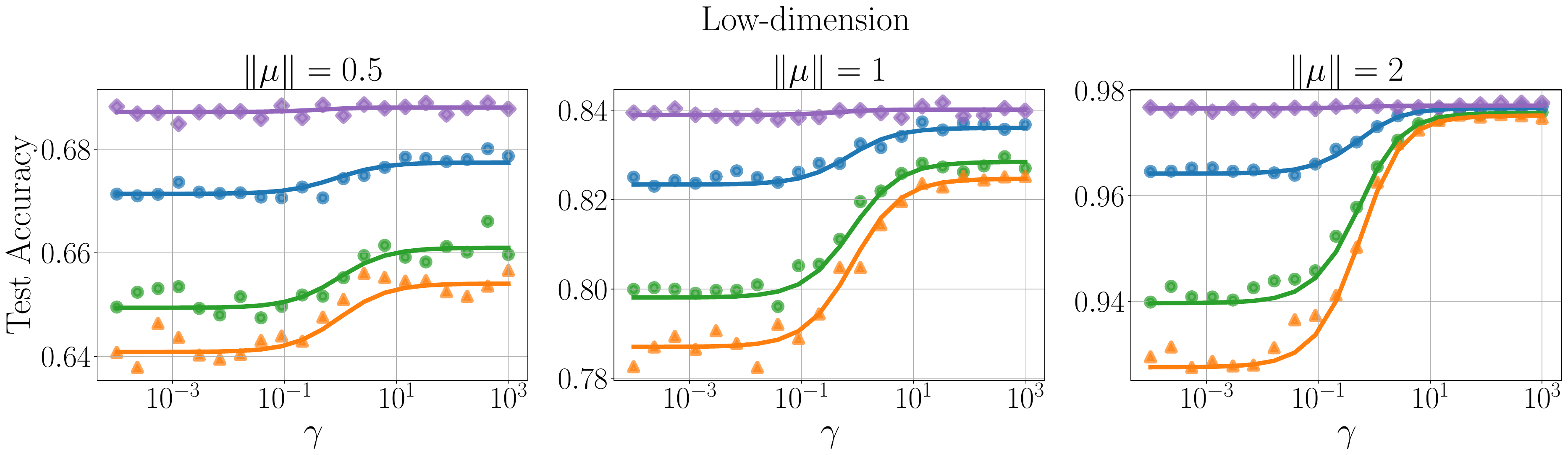}
    \includegraphics[width = \textwidth]{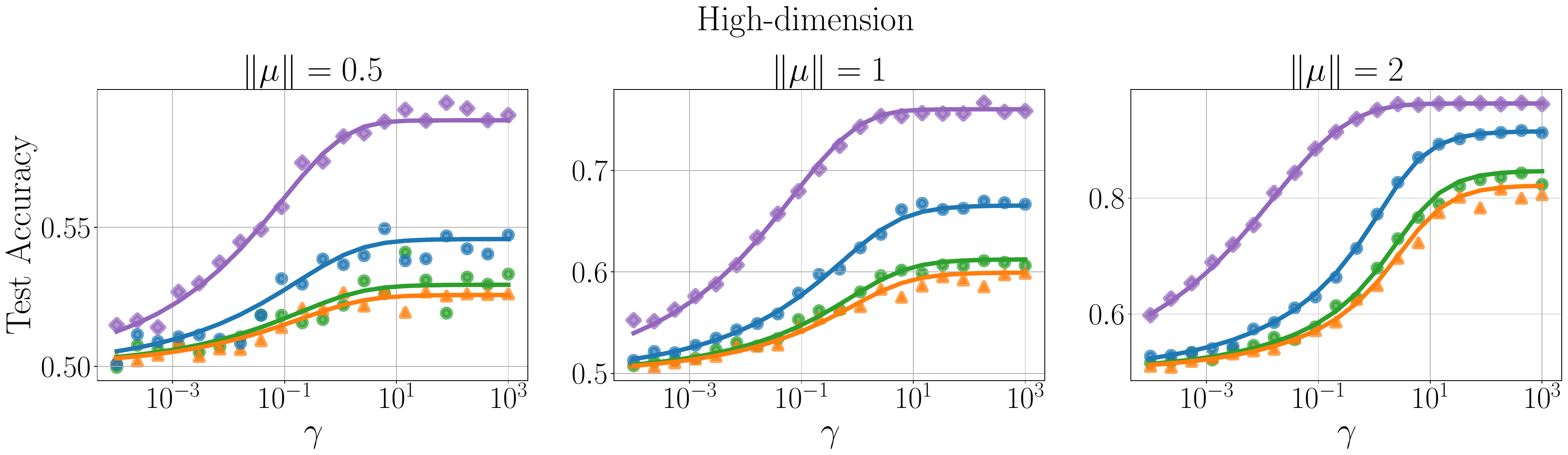}
    \includegraphics[width = \textwidth]{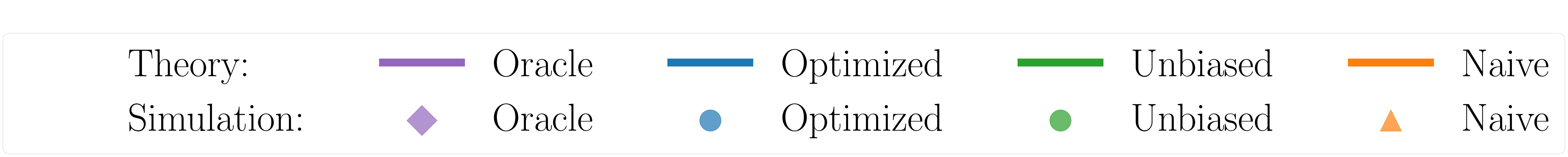}
    \caption{Empirical versus theoretical test accuracy as per Proposition \ref{prop:test-accuracy} for different variants of LPC. We used ($n, p = 2000, 20$) for Low-dimensional plot ($n, p = 200, 200$) and for High-dimensional experiment, $\pi_1 = 0.3$, $\varepsilon_+ = 0.4$, $\varepsilon_- = 0.3$ and varied $\gamma$.} 
    \label{fig:accuracy-gamma}
\end{figure}

Figure \ref{fig:epsilon-estimation} shows the result of estimating $\varepsilon_+$ using our approach as described in Section \ref{sec_noise_estimation}. We particularly notice that the estimated value of $\varepsilon_+$ is consistent even for small SNR $\Vert \vmu \Vert$.
\begin{figure}[H]
\centering
\includegraphics[width=\textwidth]{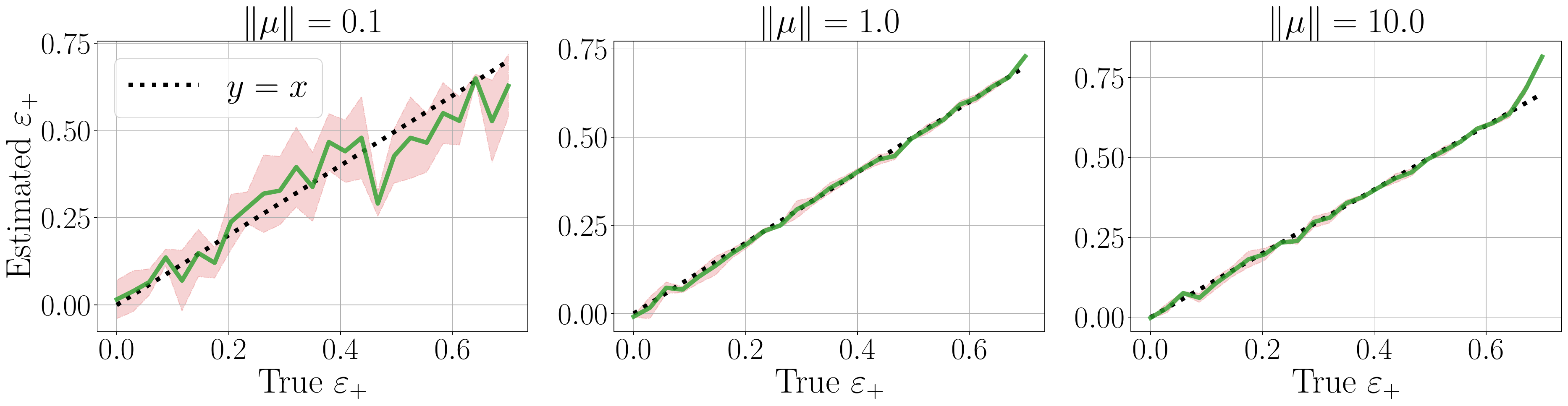}
\caption{ Estimation of the label noise rates as described in Section \ref{sec_noise_estimation}. We used $n = 1000$, $p = 100$, $\pi_1 = \frac13$, $\varepsilon_- = 0.2$, $(\rho_+^{(1)}, \rho_-^{(1)}) = (0, 0.1) $ and $(\rho_+^{(2)}, \rho_-^{(2)}) = (0, 0.4) $.}
\label{fig:epsilon-estimation}
\end{figure}

\section{Finding optimal parameters}\label{appendix_maximal_rho}
We denote by $\pi = \pi_1$ the proportion of data belonging to $\gC_1$ (hence $\pi_2 = 1 - \pi$). Our goal is to maximize the theoretical test accuracy as defined in \ref{prop:test-accuracy} with respect to $\rho_+$ for any fixed $\rho_-$. This is equivalent to maximizing the term  $\frac{m_\rho^2}{\nu_\rho - m_\rho^2}$ since $\varphi(\cdot)$ is a decreasing function. We have that:
\begin{align*}
    r(\rho_+) = \frac{m_\rho^2}{\nu_\rho - m_\rho^2} = \frac{N_1(\rho_+)}{D_1(\rho_+)}
\end{align*}
where:
\begin{equation*}
    N_1(\rho_+) = - h m_{\text{oracle}}^{2} \left(\pi \left(2 \epsilon_{-} + \rho_{+} - \rho_{-} - 1\right) - \left(\pi - 1\right) \left(2 \epsilon_{+} - \rho_{+} + \rho_{-} - 1\right)\right)^{2} \left(\rho_{+} + \rho_{-} - 1\right)^{2}
\end{equation*}
and 
\begin{align*}
    &D_1(\rho_+) = - h \left(\kappa - m_{\text{oracle}}^{2}\right) \left(\pi \left(2 \epsilon_{-} + \rho_{+} - \rho_{-} - 1\right) - \left(\pi - 1\right) \left(2 \epsilon_{+} - \rho_{+} + \rho_{-} - 1\right)\right)^{2} \left(\rho_{+} + \rho_{-} - 1\right)^{2} \\
    &+ \left(h - 1\right) \left(\pi \left(4 \epsilon_{-} \left(\rho_{+} - \rho_{-}\right) + \left(- \rho_{+} + \rho_{-} + 1\right)^{2}\right) + \left(\pi - 1\right) \left(4 \epsilon_{+} \left(\rho_{+} - \rho_{-}\right) - \left(\rho_{+} - \rho_{-} + 1\right)^{2}\right)\right) \left(\rho_{+} + \rho_{-} - 1\right)^{2}
\end{align*}
And differentiating $r$ with respect to $\rho_+$ gives us:
\begin{align*}
    r'(\rho_+) = \frac{N_2(\rho_+)}{D_2(\rho_+)}
\end{align*}

where :
\begin{align*}
    &N_2(\rho_+) = 2 h m_{\text{oracle}}^{2} (\pi (2 \epsilon_{-} + \rho_{+} - \rho_{-} - 1) - (\pi - 1) (2 \epsilon_{+} - \rho_{+} + \rho_{-} - 1))\\
    &\times (- (\pi (2 \epsilon_{-} + \rho_{+} - \rho_{-} - 1) - (\pi - 1) (2 \epsilon_{+} - \rho_{+} + \rho_{-} - 1)) \\
    &\times (h (\kappa - m_{\text{oracle}}^{2}) (2 \pi - 1) (\pi (2 \epsilon_{-} + \rho_{+} - \rho_{-} - 1) - (\pi - 1) (2 \epsilon_{+} - \rho_{+} + \rho_{-} - 1)) (\rho_{+} + \rho_{-} - 1) \\
    &+ h (\kappa - m_{\text{oracle}}^{2}) (\pi (2 \epsilon_{-} + \rho_{+} - \rho_{-} - 1) - (\pi - 1) (2 \epsilon_{+} - \rho_{+} + \rho_{-} - 1))^{2} \\
    &- (h - 1) (\pi (4 \epsilon_{-} (\rho_{+} - \rho_{-}) + (- \rho_{+} + \rho_{-} + 1)^{2}) + (\pi - 1) (4 \epsilon_{+} (\rho_{+} - \rho_{-}) - (\rho_{+} - \rho_{-} + 1)^{2})) \\
    &- (h - 1) (\pi (2 \epsilon_{-} + \rho_{+} - \rho_{-} - 1) + (\pi - 1) (2 \epsilon_{+} - \rho_{+} + \rho_{-} - 1)) (\rho_{+} + \rho_{-} - 1)) \\
    &+ (h (\kappa - m_{\text{oracle}}^{2}) (\pi (2 \epsilon_{-} + \rho_{+} - \rho_{-} - 1) - (\pi - 1) (2 \epsilon_{+} - \rho_{+} + \rho_{-} - 1))^{2} \\
    &- (h - 1) (\pi (4 \epsilon_{-} (\rho_{+} - \rho_{-}) + (- \rho_{+} + \rho_{-} + 1)^{2}) + (\pi - 1) (4 \epsilon_{+} (\rho_{+} - \rho_{-}) - (\rho_{+} - \rho_{-} + 1)^{2}))) \\
    &\times (\pi (2 \epsilon_{-} + \rho_{+} - \rho_{-} - 1) - (\pi - 1) (2 \epsilon_{+} - \rho_{+} + \rho_{-} - 1) + (2 \pi - 1) (\rho_{+} + \rho_{-} - 1)))
\end{align*}
And finally, solving $N_2(\rho_+) = 0$ gives us two solutions:
\begin{align*}
    &\rho_+^* = \frac{\pi^2 \epsilon_-(\epsilon_- - 1) + (1 - \pi)^2 \epsilon_+(1 - \epsilon_+)}{\pi (1 - \pi)(1 - \epsilon_+ - \epsilon_-)} + \rho_-, \quad & \bar \rho_+ = \frac{1 - 2 \pi \varepsilon_- - 2 (1 - \pi) \varepsilon_+}{2 \pi - 1} + \rho_-.
\end{align*}

\section{Loss Generalization}\label{appendix_bce_loss}
To investigate the extension of our approach to other bounded losses in addition to the squared loss considered in the main paper, we evaluated our LPC trained with the label perturbed loss (\ref{lpc-loss}) using a binary-cross-entropy loss, that is:
\begin{equation}
    \ell(s(\vx), y) = -y \log \left (s(\vx) \right) - (1 - y) \log \left (1 - s(\vx) \right),
\end{equation}
where $s(\vx) = \frac{1}{1 + \exp(-\vw^\top \vx)}$ and $y$ is in $\{ 0, 1\}$. Figures \ref{fig:bce-loss-synthetic} and \ref{fig:bce-loss-real} summarize the obtained test accuracies by setting $\rho_-$ to zero and varying $\rho_+$ on both synthetic and real data respectively. As anticipated theoretically with the squared loss, we remark similar behavior about the existence of an optimal $\rho_+^*$ that maximizes the accuracy beyond the \textit{unbiased} approach.
\begin{figure}[H]
    \centering
    \includegraphics[width = 0.9\textwidth]{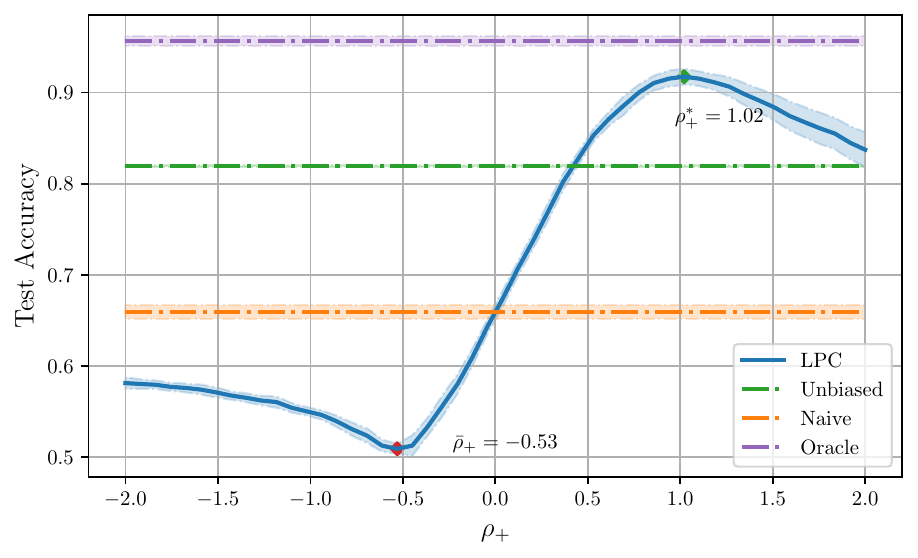}
    \caption{Test Accuracy on Synthetic data with classifiers obtained through minimizing the binary-cross-entropy loss using gradient descent. We used the parameters $n = 1000$, $p = 1000$, $\pi_1 = 0.3$, $\Vert \vmu \Vert = 2$, $\varepsilon_+ = 0.4$, $\varepsilon_- = 0.3$ and a learning rate of $0.1$.}
    \label{fig:bce-loss-synthetic}
\end{figure}

\begin{figure}[H]
    \centering
    \includegraphics[width = 0.9\textwidth]{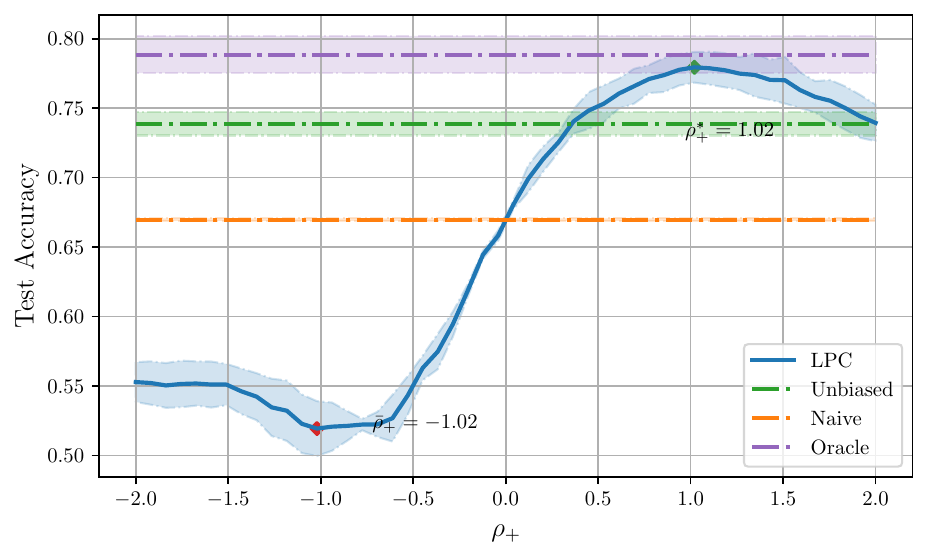}
    \caption{Test Accuracy on \texttt{Dvd} Amazon dataset \citep{blitzer2007biographies} with classifiers obtained through minimizing the binary-cross-entropy loss using gradient descent. We used the parameters $n = 1600$, $p = 400$, $\pi_1 = 0.3$, $\Vert \vmu \Vert = 2$, $\varepsilon_+ = 0.3$, $\varepsilon_- = 0.2$ and a learning rate of $0.1$.}
    \label{fig:bce-loss-real}
\end{figure}

\section{Multi-class extension: Multi-LPC}\label{appendix_multi_class}
In this section, we provide some evidence to show that our setting can be further extended to multi-class classification by considering the following settings.
\subsection{Setting}
We consider having a set of $n$ i.i.d $p$-dimensional vectors $\vx_1, \vx_2, ..., \vx_n \in \sR^p$ and corresponding labels $y_1, y_2, ..., y_n \in \{1, ..., k \}$ such that the $\vx_i$'s are sampled from a Gaussian mixture of $k$ clusters $\gC_1, ..., \gC_k$ with, $a \in \{1, ..., k \}$:
\begin{align*}
    &\vx_i \in \gC_a \quad \Leftrightarrow \quad \vx_i = \vmu_a + \vz_i,
\end{align*}
where $\vmu_a \in \sR^p$ and $\vz_i \in \gN(0, \rmI_p)$. We consider that the true labels are flipped randomly to get $\tilde y_1, \tilde y_2, ..., \tilde y_n$ such that for $a, b \in \{ 1, ..., k\}$:
\begin{align}
    \mathbb P(\tilde y_i = a  \mid  y_i = b) = \varepsilon_{a, b}, \quad \sum_{b= 1}^k \varepsilon_{a, b} < 1.
\end{align}

\subsection{Linear model}
Let $\vy_i \in \sR^k$ denote the one-hot encoding of the label $y_i$, i.e., if $\vx_i \in \gC_a$:
\begin{align*}
    \vy_{i, j} = \begin{cases}
        1 \quad \text{if} \quad j = a,\\
        0 \quad \text{otherwise}.
    \end{cases} 
\end{align*}
 Denote the data matrix $\rmX = [\vx_1, ..., \vx_n] \in \sR^{p \times n}$ and labels matrix $\rmY = [\vy_1, ..., \vy_n] \in \sR^{k \times n}$.
 \paragraph{Naive approach:}
 We consider a linear model that consists of minimizing the following regularized squared loss:
 \begin{equation}
 \label{naive-multi}
     \gL(\rmW) = \frac1n \sum_{i = 1}^n \Vert \tilde \vy_i - \rmW^\top \vx_i \Vert + \gamma \Vert \rmW \Vert^2_F,
 \end{equation}

where $\gamma \geq 0$ is a regularization parameter, and $\Vert . \Vert_F$ denotes the Frobenius norm of a matrix. The minimizer of this equation reads explicitly as:
\begin{align}
    \rmW = \frac1n \rmQ(\gamma) \rmX \tilde \rmY^\top,\quad \rmQ(\gamma) = \left( \frac1n \rmX \rmX^\top + \gamma \rmI_p \right)^{-1}.
\end{align}
\paragraph{Multi-LPC :} Let us sort the data vectors $(\vx_i)_{i = 1}^n$ in $\rmX$ and their labels $ (\tilde \vy_i)_{i = 1}^n$ in their matrices $\rmX$ and $\tilde \rmY$ such that we put the vectors of class $\gC_1$ in the first columns, then those of class $\gC_2$, and so on.\\
Let $\tilde \rmY^\top = [\vu_1, ..., \vu_k]$, each vector $\vu_i$ is defined in the following way:
\begin{align}
    \vu_{i, j} = \begin{cases}
        1 \quad \text{if} \quad \sum_{a=1}^{i - 1} \tilde n_a \leq j < \sum_{a=1}^{i} \tilde n_a \\
        0 \quad \text{otherwise}
    \end{cases} 
\end{align}
where $\tilde n_a$ is the number of noisy samples belonging to class $\gC_a$, i.e., the cardinality of this set $\{ i \in \{1, ..., n\}  \mid \tilde y_i = a\}$.
Now let $\alpha_1, ..., \alpha_k, \beta_1, ..., \beta_k \in \sR$. Our Multi-LPC approach consists of considering the following label matrix:
\begin{align}
    \rmY_{\alpha, \beta}^\top &= [\alpha_1 \vu_1 + \beta_1 (\mathbf {1}_n - \vu_1), ..., \alpha_k \vu_k + \beta_k (\mathbf {1}_n - \vu_k)] \\
    & = \tilde \rmY^\top \rmD(\alpha) + ( \rmM_1 - \tilde \rmY^\top) \rmD(\beta) 
\end{align}
where $\rmM_1 \in \sR^{n \times k} $ is the matrix containing $1$ in all its entries, and $\rmD(\alpha) \in \sR^{k \times k}$ (resp. $, \rmD(\beta) \in \sR^{k \times k}$) is a diagonal matrix containing the coefficients $\alpha_1, ..., \alpha_k$ (resp. $\beta_1, ..., \beta_k$) in its diagonal. Thus the multi-class LPC classifier is defined as:
\begin{equation}
    \rmW = \frac1n \rmQ(\gamma) \rmX \tilde \rmY_{\alpha, \beta}^\top.
\end{equation}
Our aim is to show the existence of parameters $(\alpha_i^*)_{i = 1}^k$ and $(\beta_i^*)_{i = 1}^k$ that maximize the accuracy of the classifier. 
\begin{remark}
    Remark that we can recover the Naive classifier in (\ref{naive-multi}) by taking $\alpha_i = 1$ and $\beta_i = 0$ for all $i \in \{ 1, ..., k\}$.
\end{remark}

\subsection{Experiments}
We tested our extension for $k=3$ and $k=4$ classes using synthetic data by taking:
\paragraph{For 3 classes ($k = 3$):}
We considered the following noise parameters matrix $\bm{\varepsilon}$ and the proportions $\bm{\pi}$ of data in each class ($\bm{\pi}_i$ is the proportion of data belonging to class $\gC_i$):
\begin{align*}
    &\bm{\varepsilon} = 
    \begin{pmatrix}
    0 & 0.3 & 0 \\
    0 & 0 & 0.4 \\
    0.5 & 0 & 0
    \end{pmatrix}
    & \bm{\pi} = \left ( 0.3, 0.3, 0.4\right)
\end{align*}
We also considered class $\gC_3$ of mean vector $\vmu_3$ of norm $\Vert \vmu_3 \Vert = 2$, class $\gC_1$ of mean $\vmu_1 = -\vmu_3$ and a centered class $\gC_2$ (zero norm mean).
\paragraph{For 4 classes ($k = 4$):}
We considered the parameters:
\begin{align*}
    &\bm{\varepsilon} = 
    \begin{pmatrix}
    0 & 0 & 0.5 & 0 \\
    0 & 0 & 0 & 0.3 \\
    0 & 0.4 & 0 & 0 \\
    0.3 & 0 & 0 & 0
    \end{pmatrix}
    & \bm{\pi} = \left ( 0.3, 0.2, 0.3, 0.2 \right)
\end{align*}
We also considered classes $\gC_3$ and $\gC_4$ of mean vectors $\vmu_3$ and $\vmu_4$ respectively such that: $\Vert \vmu_3 \Vert = 2$ and $\Vert \vmu_3 \Vert = 6$, and considered $\gC_1$ of mean $\vmu_1 = -\vmu_4$ and $\gC_2$ of mean $\vmu_2 = - \vmu_3$.\\

For each number of classes $k$, we found the optimal parameters (in terms of accuracy) $\bm{\alpha^*} = (\alpha^*_i)_{i = 1}^k$ and $\bm{\beta^*} = (\beta^*_i)_{i = 1}^k$ and also the worst ones $\bm{\bar \alpha} = (\bar \alpha_i)_{i = 1}^k$ and $\bm{\bar \beta} = (\bar \beta)_{i = 1}^k$ within a grid of $ G = 5000 $ parameters, using Monte Carlo simulation. To visualize the results,
we report the accuracy of the Multi-LPC approach with the parameters $\bm{\alpha}_\tau = \tau \bm{\alpha^*} + (1 - \tau) \bm{\bar \alpha} $ and  $\bm{\beta}_\tau = \tau \bm{\beta^*} + (1 - \tau) \bm{\bar \beta} $ by varying the parameter $\tau \in (0, 1)$. Figure \ref{fig:multi-class} summarizes the obtained results and we clearly observe improved accuracy for $(\bm{\alpha^*}, \bm{\beta^*})$ even approaching the oracle classifier.
\begin{figure}[H]
    \centering
    \includegraphics[width = \textwidth]{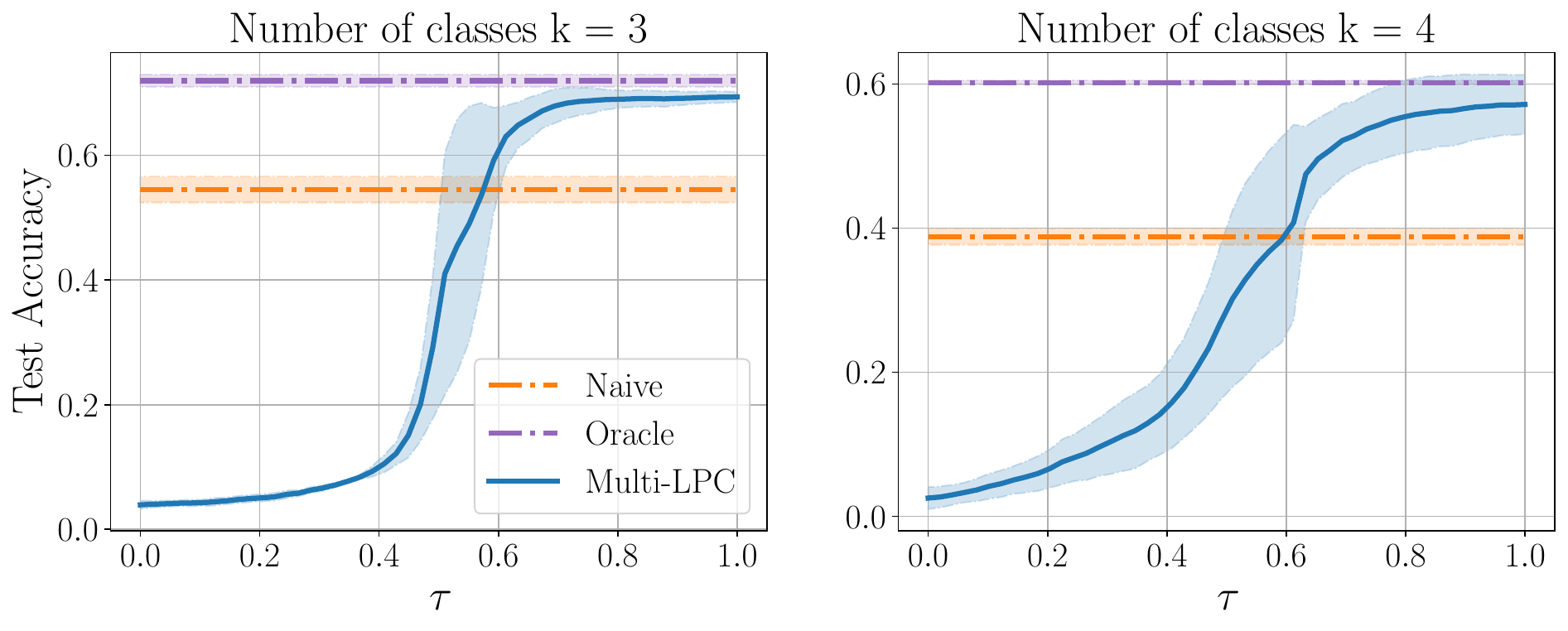}
    \caption{Multi-class classification with $n = 2000$, $p = 200$ evaluated on $3$ random seeds.}
    \label{fig:multi-class}
\end{figure}

\end{document}